\documentclass[twoside]{article}

%
\usepackage[accepted]{aistats2017}
%

\usepackage{natbib,url,amsmath,sectsty,amstext,relsize,floatpag,wrapfig,listings,subfigure,amsfonts,amssymb,amsthm,mathabx,hyperref,grffile,colortbl,dsfont,bbm,setspace, overpic, floatrow, stmaryrd, tabularx}
\usepackage[shortlabels]{enumitem} 

\usepackage[linewidth=1pt]{mdframed}

\allowdisplaybreaks 

\newcommand{\beginsupplement}{  
        \setcounter{section}{0}
        \renewcommand{\thesection}{S\arabic{section}} %
         \renewcommand{\thesubsection}{\thesection.\arabic{subsection}}
        \setcounter{table}{0}
        \renewcommand{\thetable}{S\arabic{table}} %
        \setcounter{figure}{0}
        \renewcommand{\thefigure}{S\arabic{figure}} %
     }
\usepackage{multibib} 
\newcites{si}{Additional References for the Supplementary Information}

\newcounter{assumption}

\DeclareMathOperator*{\argmax}{argmax}
\DeclareMathOperator*{\argmin}{argmin}
\newcommand\independent{\protect\mathpalette{\protect\independenT}{\perp}}
\def\independenT#1#2{\mathrel{\rlap{$#1#2$}\mkern2mu{#1#2}}}

\definecolor{LightGray}{gray}{0.9}
\newcommand\numberthis{\addtocounter{equation}{1}\tag{\theequation}}

\newtheorem{thm}{Theorem}
\newtheorem{lem}{Lemma}
\newcounter{factnum}
 \setcounter{factnum}{0}

\newcounter{claimnum}
 \setcounter{claimnum}{0}
 \newcounter{defnum}
 \setcounter{defnum}{0}

\begin{document}

%

%

\twocolumn[

\aistatstitle{Learning Optimal Interventions}

\aistatsauthor{ Jonas Mueller \And David N. Reshef \And George Du \And Tommi Jaakkola }
\aistatsaddress{ MIT Computer Science and Artificial Intelligence Laboratory } ]

\begin{abstract} 
Our goal is to identify beneficial interventions from observational data.  We consider interventions that are narrowly focused (impacting few covariates) and may be tailored to each individual or globally enacted over a population.  For applications where harmful intervention is drastically worse than proposing no change, we propose a conservative definition of the optimal intervention.  Assuming the underlying relationship remains invariant under intervention, we develop  efficient algorithms to identify the optimal intervention policy from limited data and provide theoretical guarantees for our approach in a Gaussian Process setting.  Although our methods assume covariates can be precisely adjusted, they remain capable of improving outcomes in misspecified settings where interventions incur unintentional downstream effects.  Empirically, our approach identifies good interventions in two practical applications: gene perturbation and writing improvement.  
\end{abstract}

\section{Introduction}
\label{sec:intro}
In many data-driven applications, including medicine, the primary interest is identifying interventions that produce a desired change in some associated outcome.  Due to experimental limitations, learning in such domains is commonly restricted to an observational dataset $\mathcal{D}_n := \left\{ \left(x^{(i)}, y^{(i)}\right) \right\}_{i=1}^n$ which consists of IID samples from a population with joint distribution $\mathbb{P}_{XY}$ over covariates (features)  $X \in \mathbb{R}^d$ and outcomes $Y \in \mathbb{R}$.  Typically, such data is analyzed using models which facilitate understanding of the relations between variables (eg.\ assuming linearity/additivity).  Based on conclusions drawn from this analysis, the analyst decides how to intervene in a manner they confidently believe will improve outcomes.  

Formalizing such beliefs via Bayesian inference, we develop a framework that identifies beneficial interventions directly from the data.  In our setup, an intervention on an individual with pre-treatment covariates $X$  produces post-treatment covariate values $\widetilde{X}$ that determine the resulting outcome $Y$ (depicted as the graphical model: $X \rightarrow \widetilde{X} \rightarrow Y)$.  Each possible intervention results in a diffferent $\widetilde{X}$.  More concretely, we make the following simplifying assumption: 
\begin{equation}
Y = f(\widetilde{X}) + \varepsilon \  \text{ with } \mathbb{E}[\varepsilon] = 0, \varepsilon \independent  \widetilde{X}, X 
\label{eq:functionassumption}
\end{equation}
for some underlying function $f$ that encodes the effects of causal mechanisms (ie.\ $\widetilde{X}$ represents a fair description of the system state, and some covariates in $\widetilde{X}$ causally affect $Y$, not vice-versa).  The observed data is comprised of naturally occurring covariate values where we presume $\widetilde{x}^{(i)} = x^{(i)}$ for  $i=1,\dots,n$ (ie.\ the system state remains static without intervention, so the observed covariate values directly influence the observed outcomes).  Moreover, we assume the relationship between these covariate values and the outcomes remains invariant, following the same (unknown) function $f$ for any $\widetilde{X}$ arising from one of our feasible interventions (or no intervention at all).  Note that this assumption precludes the presence of hidden confounding.  \cite{Peters2015} have also relied on this invariance assumption, verifying it as a reasonable property of causal mechanisms in nature.

Given this data, we aim to learn an intervention policy defined by a covariate transformation $T : \mathbb{R}^d \rightarrow \mathbb{R}^d$, applied to each individual in the population.  Here, $T(x)$ presents a desired setting of the covariates that should be reflected by subsequent intervention to actually influence outcomes.  When $T$ only specifies changes to a subset of the covariates, an intervention seeking to realize $T$ may have unintended side-effects on covariates outside of this subset.  We ignore such ``fat hand'' settings  \mbox{\citep{Duvenaud2010}} until \S\ref{causalconnection}.  Instead, our methods assume interventions can always be carried out with great precision to ensure the desired transformation $T$ is exactly reflected in the post-treatment values: $\widetilde{x} = T(x)$.  Our goal is to identify the transformation $T$ which produces the largest corresponding post-treatment improvement with high certainty.  $T(x)$ can either represent a  single mapping to be performed on all individuals (global policy) or encode a personalized policy where the intervened upon variables and their values may change with $x$.   

Our strong assumptions are made to ensure that statistical modeling alone suffices to identify beneficial interventions.  While many real-world tasks violate these conditions, there exist important domains in which  violations are sufficiently minor that our methods can discover effective interventions (cf.\ \citet{Carulla2016, Peters2015}).  We use two applications to illustrate our framework.  One  is a writing improvement task where the data consists of documents labeled with associated outcomes (eg.\ grades or popularity) and the goal is to suggest beneficial changes to the author. Our second example is a gene perturbation task where the expression of some regulatory genes can be up/down-regulated in a population (eg.\ cells or bacteria) with the goal of inducing a particular phenotype or activation/repression of a downstream gene.  In these examples, covariates are known to cause outcomes and our other assumptions may hold to some degree,  depending on the type of external intervention used to alter covariate values.

The contributions of this work include: (1) a formal definition of the optimal intervention that exhibits  desirable characteristics under uncertainty due to limited data, (2) widely applicable types of (sparse) intervention policy that are easily enacted across a whole population, (3) algorithms to find the optimal intervention under practical constraints, (4) theoretical insight regarding our methods' properties in Gaussian Process settings as well as certain misspecified applications.

\section{Related Work}
\label{sec:relwork}

The same invariance assumption has been exploited by \citet{Peters2015} and \citet{Carulla2016} for causal variable selection in regression models.  Recently, researchers such as \citet{Duvenaud2010} and \citet{Kleinberg2015} have supported a greater role for predictive modeling in various decision-making settings.  \cite{Zeevi2015} use gradient boosting to predict glycemic response based on diet (and personal/microbiome covariates), and found they can naively leverage their regressor to select personalized diets which result in superior glucose levels than the meals proposed by a clinical dietitian.  As treatment-selection in high-impact applications (eg.\ healthcare)  grows increasingly reliant on supervised learning methods, it is imperative to properly handle uncertainty.  

Nonlinear Bayesian predictive models have been employed by  \citet{Hill2012}, \citet{Brodersen2015}, and \citet{Krishnan2015} for quantifying the effects of a given treatment from observations of individuals who have been treated and those who have not.  Rather than considering a single given intervention, we introduce the notion of an optimal intervention under various practical constraints, and how to identify such a policy from a limited dataset (in which no individuals have necessarily received any interventions).

Although our goals appear similar to Bayesian optimization and bandit problems \citep{Shahriari2016, Agarwal2013}, additional data is not collected in our setup.  Since we consider settings where interventions are proposed based on all available data, acquisition functions for sequential exploration of the response-surface are not appropriate.  As most existing data is not generated through sequential experimentation, our methods are more broadly applicable than iterative approaches like Bayesian optimization.  

A greater distinction is our work's focus on the pre vs.\ post-intervention change in outcome for each particular individual, whereas Bayesian optimization seeks a single globally optimal configuration of covariates.  In practice, feasible covariate transformations are constrained based on an individual's naturally occurring covariate-values, which stem from some underlying population beyond our control.  For example in the writing improvement task, the goal is not to identify a globally optimal configuration of covariates that all texts should strive to achieve, but rather to inform a particular author of simple modifications likely to improve the outcome of his/her existing article.  Appropriately treating such constraints is particularly important when we wish to prescribe a global policy corresponding to a single  intervention applied to all individuals from the population (there is no notion of an underlying population in Bayesian optimization).

\section{Methods}
\label{sec:methods}
Our strategy is to first fit a Bayesian model for $Y \mid X$ whose posterior encodes our beliefs about the underlying function $f$ given the observed data.  Subsequently, the posterior for $f \mid \mathcal{D}_n$ is used to identify a transformation of the covariates $T : \mathbb{R}^d \rightarrow \mathbb{R}^d$ which is likely to improve expected post-intervention outcomes according to our current beliefs.  The posterior for $f \mid \mathcal{D}_n$ may be summarized at any points $x, x' \in \mathbb{R}^d$ by mean function $\mathbb{E}[f(x) \mid  \mathcal{D}_n]$ and covariance function $\text{Cov}(f(x) , f(x') \mid   \mathcal{D}_n)$.

\subsection{Intervening at the Individual Level}
For $x \in \mathbb{R}^d$ representing the covariate-measurements from an individual, we are given a set $\mathcal{C}_x \subset \mathbb{R}^d$ that denotes constraints of possible transformations of $x$. Let $T(x) = \widetilde{x} \in \mathcal{C}_x$ denote the new covariate-measurements of this individual after a particular intervention on $x$ which alters covariates as specified by transformation $T: \mathbb{R}^d  \rightarrow \mathbb{R}^d$.  Recall that we assume an intervention can be conducted to produce post-treatment covariate-values that exactly match any feasible transformation: $\widetilde{x} = T(x)$, and we thus write $f(T(x))$ in place of $\mathbb{E}_{\varepsilon}[Y \mid \widetilde{X} = T(x)]$.  

We first consider  \emph{personalized interventions} in which $T$ may be tailored to a particular  $x$.
Under the Bayesian perspective, $f \mid \mathcal{D}_n$ is randomly distributed according to our posterior beliefs, and we define the \emph{individual expected gain} function:
\begin{equation} G_x(T) :=  f(T(x)) - f(x) \mid \mathcal{D}_n 
\label{eq:indgain}
\end{equation} 
Since $f(x) = \mathbb{E}_\varepsilon [Y \mid \widetilde{X} = x]$,  random function $G_x$ evaluates the expected outcome-difference at the post vs. pre-intervention setting of the covariates  (this expectation is over the noise $\varepsilon$, not our posterior).  To infer the best personalized intervention (assuming higher outcomes are desired), we use optimization over vectors $T(x) \in \mathbb{R}^d$ to find:
\begin{equation}
T^*(x) = \argmax_{T(x) \in \mathcal{C}_x} \ F^{-1}_{G_x(T)}(
\alpha) 
\label{eq:personalized}
\end{equation}
where $F^{-1}_{G(\cdot)}(\alpha)$ denotes the $\alpha^{\text{th}}$ quantile of our posterior distribution over $G(\cdot)$.  We choose $0 < \alpha < 0.5$, which implies the intervention that produces $T^*(x)$ should improve the expected outcome with probability $\ge 1-\alpha$ under our posterior beliefs. 

Defined based on known constraints of feasible interventions, the set $\mathcal{C}_x \subset \mathbb{R}^d$ enumerates possible transformations that can be applied to an individual with covariate values $x$.  If the set of possible interventions is independent of $x$ (ie.\ $\mathcal{C}_x = \mathcal{C} \ \forall x$), then our goal is similar to the optimal covariate-configuration problem studied in Bayesian optimization.  However, in many practical applications, $x$-independent transformations are not realizable through intervention.  Consider gene perturbation, a scenario where it is impractical to simultaneously target more than a few genes due to technological limitations.  If alternatively intervening on a quantity like caloric intake, it is only realistic to change an individual's current value by at most a small amount.  The choice  $\mathcal{C}_x := \{ z \in \mathbb{R}^d : ||x - z||_0 \le k \}$ reflects the constraint that at most $k$ covariates can be intervened upon.  We can denote limits on the amount that the $s^{\text{th}}$ covariate may be altered by $\mathcal{C}_x := \{ z \in \mathbb{R}^d : |x_s - z_s| \le \gamma_s \}$ for $s \in \{1,\dots,d\}$.  In realistic settings, $\mathcal{C}_x$ may be the intersection of many such sets reflecting other possible constraints such as boundedness, impossible joint configurations of multiple covariates, etc. 

For any $x, T(x) \in \mathbb{R}^d$: the posterior distribution for $G_x(T)$ has: 
\begin{align*} & \text{mean}  = \mathbb{E}[f(T(x) \mid \mathcal{D}_n ] - \mathbb{E}[f(x) \mid \mathcal{D}_n \numberthis \\
 & \text{variance} = \text{Var}(f(T(x)) \mid \mathcal{D}_n) + \text{Var}(f(x) \mid \mathcal{D}_n)  \\
& \hspace*{18mm} - 2\text{Cov}(f(T(x)), f(x) \mid \mathcal{D}_n)  \numberthis
 \end{align*}
 which is easily computed using the corresponding mean/covariance functions of the posterior $f \mid \mathcal{D}_n$.  When $T(x) = x$, the objective in (\ref{eq:personalized}) takes value 0, so any superior optimum corresponds to an intervention we are confident will lead to expected improvement.  If there is no good intervention in $\mathcal{C}_x$ (corresponding to a large increase in the posterior mean) or too much uncertainty about $f(x)$ given limited data, then our method simply returns $T^*(x) = x$ indicating  no intervention should be performed.  
 
Our objective exhibits these desirable characteristics because it relies on the posterior beliefs regarding both $f(T(x))$ and $f(x)$, which are tied via the covariance function.  In contrast, a similarly-conservative lower confidence bound objective (ie.\ the UCB acquisition function with lower rather than upper quantiles)  would only consider  $f(T(x))$, and could  propose unsatisfactory transformations where ${{\mathbb{E}[f(x) \mid \mathcal{D}_n ]} > \mathbb{E}[f(T(x)) \mid \mathcal{D}_n ]}$.

\subsection{Intervening on Entire Populations}
The above discussion focused on personalized interventions tailored on an individual basis.  In certain applications, policy-makers are interested in designing a single intervention which will be applied to all individuals from the same underlying population as the data.  Relying on such a \emph{global policy} is the only option in cases where we no longer observe covariate-measurements of new individuals outside the data.   In our gene perturbation example, gene expression may no longer be individually profiled in future specimens that receive the decided-upon intervention to save costs/labor.  

Here, the covariates $X$  are assumed distributed according to some underlying (pre-intervention) population, and we define the \emph{population expected gain} function:
\begin{equation*} G_X(T) := \mathbb{E}_X [ G_x(T) ] =  \mathbb{E}_X \big[ f(T(x)) - f(x) \mid \mathcal{D}_n \big] 
\end{equation*}
which is also randomly distributed based on our posterior ($\mathbb{E}_X$ is expectation with respect to the covariate-distribution $X$ which is not modeled by $f \mid \mathcal{D}_n$).  Our goal is now to find a single transformation $T : \mathbb{R}^d \rightarrow  \mathbb{R}^d$ corresponding to a \emph{population intervention} which will (with high certainty under our posterior beliefs) lead to large outcome improvements on average across the population:  
\begin{equation}
T^* = \argmax_{T \in \mathcal{T}} \ F^{-1}_{G_X(T)}(\alpha) 
\label{eq:policy}
\end{equation}
Here, the family of possible transformations $\mathcal{T}$ is constrained such that $T(x) \in \mathcal{C}_x$ for all $T \in \mathcal{T}, x \in \mathbb{R}^d$.  As a good model of our multivariate features may be {unknown}, we instead work with the empirical estimate:
\begin{flalign*}
&& T^* & = \argmax_{T \in \mathcal{T}} \ F^{-1}_{G_n(T)}
(\alpha) && \numberthis
\label{eq:empiricalpolicy} 
\\
\text{where }  &&  G_n(T) & := \frac{1}{n}  \hspace*{-0.1mm} \sum_{i=1}^n \big[ f(T(x^{(i)})) - f(x^{(i)}) \big] \ \mid \mathcal{D}_n &&
\end{flalign*}
is the \emph{empirical} population expected gain, whose posterior distribution has:
\begin{align*}
& \hspace*{-2mm} \text{mean} = \frac{1}{n} \hspace*{-0.5mm} \sum_{i=1}^n \mathbb{E}[f(T(x^{(i)})) \mid \mathcal{D}_n] - \mathbb{E}[f(x^{(i)}) \mid \mathcal{D}_n]  \numberthis \\
& \hspace*{-2mm} \text{variance} =  \frac{1}{n^2} \sum_{i=1}^n \sum_{j=1}^n \Big[ \text{Cov}\left( f(x^{(i)}), f(x^{(j)}) \mid \mathcal{D}_n \right) \\
& \hspace*{15.5mm} - \text{Cov}\big( f(T(x^{(i)})), f(x^{(j)}) \mid \mathcal{D}_n \big) \\
& \hspace*{15.5mm} - \text{Cov}\big( f(x^{(i)}), f(T(x^{(j)})) \mid \mathcal{D}_n \big) \\[-5pt]
&  \hspace*{15.5mm} + \text{Cov}\big(f(T(x^{(i)})), f(T(x^{(j)})) \mid \mathcal{D}_n \big) \Big] \numberthis
\end{align*}

The population intervention objective in (\ref{eq:empiricalpolicy}) is again 0 for the identity mapping $T(x) = x$.  Under excessive uncertainty or a dearth of beneficial transformations in $\mathcal{T}$, the policy produced by this method will again simply be to perform no intervention.   In this population intervention setting, $T$ is designed  assuming future individuals will stem from the same underlying distribution as the samples in $\mathcal{D}_n$.    Although $T$ is a function of $x$, the form of the transformation must be agnostic to  the specific values of $x$ (so the intervention can be applied to new individuals without measuring their covariates).  

We consider two types of transformations that we find widely applicable.   \emph{Shift} interventions involve transformations of the form: $T(x) = x + \Delta$ where $\Delta \in \mathbb{R}^d$ represents a (sparse) shift that the policy applies to each individuals' covariates (eg.\ always adding 3 to the value of the second covariate corresponds to $T(x) = [x_1, x_2 + 3, \dots, x_d]$).  \emph{Covariate-fixing} interventions are policies which set certain covariates to a constant value for all individuals, and involve transformations $T_{\mathcal{I}\shortrightarrow z}(x) = [z_1, \dots, z_d]$ such that for some covariate-subset $\mathcal{I} \subseteq \{1,\dots,d\}:  z_j = x_j \ \forall j \notin \mathcal{I}$ and for $j \in \mathcal{I}$: $z_j \in \mathbb{R}$ is  fixed across all $x$  (eg.\ always setting the first covariate to 0, for example in gene knockout, corresponds to $T(x) = [0, x_2, \dots, x_d]$ $\forall x$).  Figure \ref{fig:examples} depicts examples of these different interventions.  Under a sparsity constraint, we must carefully model the underlying population in order to identify the best covariate-fixing intervention (here, setting $X_1$ to a large value is superior to intervening on $X_2$).  

\vspace*{-2mm}
\begin{figure}[h!] \centering
\includegraphics[width = 0.7\textwidth]{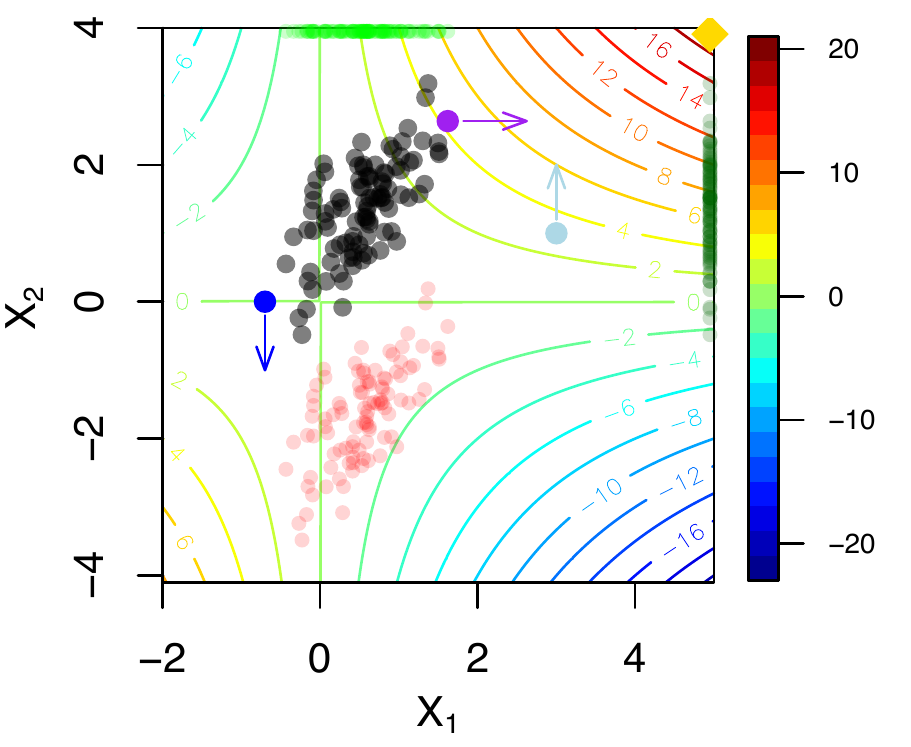} \vspace*{-3mm}
\caption{Contour plot of expected outcomes over feature space $ [X_1, X_2]$ for relationship  ${Y = X_1 \cdot X_2 +~\varepsilon}$.  Black points: the underlying population.  Gold diamond: optimal covariate-setting if any transformation in the box were feasible.  Red points:  same population after  shift intervention $\Delta = [-3,  0]$.  Light (or dark) green points (along border): best covariate-fixing intervention which can only set $X_2$ (or only $X_1$) to a fixed value. Blue, purple, light blue points: individuals who receive a single-variable personalized intervention (arrows indicate direction of optimal  transformation).}
\label{fig:examples}
\end{figure}

\section{Algorithms}
\label{sec:algorithms}

Throughout this work, we use Gaussian Process (GP) regression \citep{Rasmussen2006} to model $Y \mid X$ as described in \S\ref{sec:gp}  (`S'  indicates references in the Supplementary Material).
This nonparametric method has been favored in many applications as it produces both accurate predictions and effective measures of uncertainty (with closed-form estimators available in the standard case).  Furthermore, a variety of GP models exist for different  settings including: non-Gaussian response variables \citep{Rasmussen2006}, non-stationary relationships \citep{Paciorek2004}, deep representations \citep{Daminaou2013}, measurement error \citep{McHutchon2011}, and heteroscedastic noise \citep{Le2005}.  While these variants are not employed in this work, our methodology can be directly used in conjunction with such extensions (or more generally, any model which produces a useful posterior for $f \mid \mathcal{D}_n$).

Under the standard GP model, $G_x(T)$ follows a Gaussian  distribution and the $\alpha^{\text{th}}$ quantile of our personalized gain is simply given by: 
\begin{align*}
F^{-1}_{G_x(T)} =  & \mathbb{E}[ G_x(T) ] + \Phi^{-1}(\alpha) \cdot \text{Var} [ G_x(T)] \numberthis
\label{eq:quantilegp}
\end{align*}
 where $\Phi^{-1}$ denotes the $N(0,1)$ quantile function.  The quantiles of the empirical population gain may be similarly obtained.  When a smooth smooth covariance kernel $k(\cdot, \cdot)$ is adopted in the GP prior, derivatives of our intervention-objectives are easily computed with respect to $T$.

In many practical settings, an intervention that only affects a small subset of variables is desired.   Software to improve text, for example, should not overwhelm authors with a multitude of desired changes, but rather present  a concise list of the most beneficial revisions in order to retain underlying semantics.  
Note that identifying a sparse transformation of the covariates is different from feature selection in  supervised learning (where the goal is to identify dimensions along which $f$ varies most).  In contrast, we seek the dimensions $\mathcal{I} \subset \{1,\dots,d \}$ along which one of our feasible covariate-transformations can produce the largest high-probability increase in $f$, assuming the other covariates remain fixed at their initial pre-treatment values (in the case of personalized intervention) or follow the same distribution as the pre-intervention population (in the case of a global policy).

For a shift intervention $T(x) = x+ \Delta$, we introduce the convenient notation $G_n(\Delta) := G_n(T)$.  In applications  where shifting $x_s$ (the $s^{\text{th}}$ covariate for $s \in \{1,\dots, d\}$) by one unit incurs cost $\gamma_s$, we account for these costs by  considering the following regularized intervention-objective:
\begin{equation} \label{eq:shiftpopulation}
J_\lambda(\Delta) :=   F^{-1}_{G_n(\Delta)}(\alpha )  -  \lambda \sum_{s=1}^d \gamma_s | \Delta_s|
\end{equation}
By maximizing this objective over feasible set $\mathcal{C}_{\Delta} := \{ \Delta \in \mathbb{R}^d : x + \Delta \in \mathcal{C}_{x} \text{ for all } x \in \mathbb{R}^d \}$, policy-makers can decide which variables to intervene upon (and how much to shift them), depending on the relative value of outcome-improvements (specified by $\lambda$).

This optimization is performed using the proximal gradient method \citep{Bertsekas1995}, where at each iterate: a step in the gradient direction is followed by a soft-thresholding operation \citep{Bach2012} as well as a projection back onto the feasible set $\mathcal{C}_{\Delta}$.  However, a simple gradient method may suffer from local optima.  To avoid severely suboptimal solutions, we develop a continuation technique \citep{Mohabi2012}  that performs a series of gradient-based optimizations over variants of this objective with tapering levels of added smoothness (details in \S\ref{supsec:algorithms}).

In some settings, one may want to ensure at most $k < d$ covariates are intervened upon.  We identify the optimal $k$-sparse shift intervention via the Sparse Shift Algorithm below, which relies on $\ell_1$-relaxation \citep{Bach2012} and the regularization path of our penalized objective in (\ref{eq:shiftpopulation}).
  
\vspace*{4mm}

\begin{mdframed}[leftmargin=0cm,rightmargin=0cm,skipbelow=0cm,skipabove=0cm,
   innertopmargin=3pt, innerbottommargin=3pt, innerrightmargin=0.8pt, innerleftmargin=0.8pt]
\textbf{ Sparse Shift Algorithm:} Finds best $k$-sparse \\ \hspace*{0.5mm} shift intervention.   \\ \nopagebreak
\noindent\rule[1ex]{\linewidth}{1pt}
\vspace*{-6mm}
\begin{enumerate}[1:, topsep=-1.5ex,leftmargin=6mm]
\item Set $\gamma_s = 1$ for $s = 1,\dots, d$ 
\item  Perform binary search over $\lambda$ to find: 
\vspace*{-2mm}
\begin{align*}
\lambda^* \leftarrow \argmin \Big\{ \lambda \ge 0  & \text{ s.t.} \  \Delta^* := \argmax_{\Delta \in \mathcal{C}_{\Delta}} J_\lambda(\Delta) \\[-0.75em] 
 & \text{ has $\le k$ nonzero entries}  \Big\}
\end{align*}
\vspace*{-6mm}
\item Define $\displaystyle \mathcal{I} \leftarrow \text{support}(\Delta^*_{\lambda^*}) \subseteq \{1,\dots, d\}$ \\ where $\displaystyle \Delta^*_{\lambda^*} := \argmax_{\Delta \in \mathcal{C}_{\Delta}} J_{\lambda^*}(\Delta)$
\item \textbf{Return: } \ $\displaystyle \Delta^*  \in \mathbb{R}^d \leftarrow \argmax_{\Delta \in B} J_{\hspace*{0.1mm} 0}(\Delta)$  \\
where $B := \mathcal{C}_\Delta \bigcap \big\{ \Delta \in \mathbb{R}^d : \Delta_s = 0 \text{ if } s \notin \mathcal{I}  \big\}$
\end{enumerate}
\end{mdframed}

Recall that in the case of personalized intervention, we simply optimize over vectors $T(x) \in \mathcal{C}_x$.  Any personalized transformation can therefore be equivalently expressed as a shift in terms of $\Delta_x \in \mathbb{R}^d$ such that $T(x) = x + \Delta_x$.  After substituting the individual gain $G_x(\Delta_x)$ in place of  the population gain $G_n(\Delta)$ within our definition of $J_\lambda$ in (\ref{eq:shiftpopulation}), we can thus employ the same algorithms to identify sparse/cost-sensitive personalized interventions.  To find a covariate-fixing intervention which sets $k$ of the covariates to particular fixed constants across all individuals from the population, we instead employ a forward step-wise selection algorithm (detailed in \S\ref{supsec:uniformalgorithm}), as the form of the optimization is not amenable to $\ell_1$-relaxation in this case.

\section{Theoretical Results}
\label{sec:theory}
Consider the following basic conditions: 
 \refstepcounter{assumption} (A\theassumption) 
 \label{as:first}
 all data lies in $\mathcal{C} := [0,1]^d$, 
 \refstepcounter{assumption} (A\theassumption) ${0 < \alpha \le 0.5}$. 
\label{as:last}
Throughout this section, we assume (A\ref{as:first}), (A\ref{as:last}), and the conditions laid out in \S\ref{sec:intro}  hold.  For clarity, we rewrite the true underlying relationship as $f^*$, letting $f$ now denote arbitrary functions.  Our results are with respect to the \emph{true improvement} of an intervention ${G^*_x(T) := f^*(T(x)) - f^*(x)}$, $G^*_X(T) := \mathbb{E}_X [G^*_x(T)]$ (note that $G^*_x, G^*_X$ are no longer random).
Our theory relies on Gaussian Process results derived by \cite{Srinivas2010, VanderVaart2011}, and we relegate  proofs and technical definitions to \S\ref{subsec:proofs}.

\begin{thm} \label{thm:close} Suppose we adopt a GP$\big(0, k(x,x')\big)$ prior and the following conditions hold: \\ 
\refstepcounter{assumption} (A\theassumption)
 noise variables $\varepsilon^{(i)} \overset{iid}{\sim} N(0, \sigma^2)$ 
\refstepcounter{assumption} (A\theassumption)
there exist $\rho > 0$ such that the H\"{o}lder space $C^\rho[0,1]^d$ has probability one under our prior (see \cite{VanderVaart2011}).  
\refstepcounter{assumption} (A\theassumption)
$f^*$ and any $f$ supported by the prior are Lipschitz continuous over $\mathcal{C}$ with constant $L$
\refstepcounter{assumption} (A\theassumption)
the density of our input covariates $p_X \in [a,b]$ is bounded above and below over domain $\mathcal{C}$.

Then, for all $x, T(x) \in \mathcal{C}$: \ \ \ 
$$ \mathbb{E}_{\mathcal{D}_n} \Big| F^{-1}_{G_x(T)}(\alpha) - G^*_x(T) \Big|  \le \frac{C}{\alpha} \Big(L + \frac{1}{a} \Big) \cdot  \Psi_{\hspace*{-0.4mm}f^*}\hspace*{-0.2mm}(n)^{1 / [2(d+1)]}  
$$
\end{thm}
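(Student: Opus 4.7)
The plan is to bound the quantile-to-truth gap by a pointwise posterior bias plus a pointwise posterior standard deviation (scaled by $|\Phi^{-1}(\alpha)|\le 1/(\alpha\sqrt{2\pi})$, a Markov-type estimate of the standard Gaussian quantile which supplies the $1/\alpha$ factor), and then convert each piece from an $L^2(P_X)$-averaged GP contraction rate to a pointwise bound using the Lipschitz and density structure in (A5)--(A6). The decomposition uses the triangle inequality for posterior standard deviation, $\sqrt{\mathrm{Var}(G_x(T))}\le \sigma_n(T(x))+\sigma_n(x)$ with $\sigma_n^2(z):=\mathrm{Var}(f(z)\mid\mathcal{D}_n)$, yielding, with $B(z):=|\mathbb{E}[f(z)\mid\mathcal{D}_n]-f^*(z)|$,
\[
\bigl|F^{-1}_{G_x(T)}(\alpha)-G^*_x(T)\bigr|\le B(T(x))+B(x)+\tfrac{1}{\alpha\sqrt{2\pi}}\bigl(\sigma_n(T(x))+\sigma_n(x)\bigr).
\]

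Next I would invoke the GP posterior contraction result of \citet{VanderVaart2011}, which under (A3)--(A4) supplies a rate $\Psi_{f^*}(n)$ with
\[
\mathbb{E}_{\mathcal{D}_n}\bigl\|\mathbb{E}[f\mid\mathcal{D}_n]-f^*\bigr\|_{L^2(P_X)}^2 + \mathbb{E}_{\mathcal{D}_n}\mathbb{E}_X \sigma_n^2(X)\le \Psi_{f^*}(n),
\]
the variance piece following from the bias--variance identity $\mathbb{E}_{f\mid\mathcal{D}_n}\|f-f^*\|_{L^2(P_X)}^2 = \|\mathbb{E}[f\mid\mathcal{D}_n]-f^*\|_{L^2(P_X)}^2 + \mathbb{E}_X\sigma_n^2$. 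The conversion from these averaged quantities to pointwise control relies on an elementary lemma (which is where the exponent $1/(d+1)$ enters): if $h:[0,1]^d\to[0,\infty)$ is $M$-Lipschitz and $p_X\ge a$, then since $h\ge h(x_0)/2$ on the ball $B(x_0,h(x_0)/(2M))$, one obtains $h(x_0)^{d+1}\le C_d M^d \mathbb{E}_X h(X)/a$. I would apply this to $h=B$, which is $2L$-Lipschitz because (A5) makes the $L$-Lipschitz ball a convex prior-probability-one set, so the posterior concentrates on it and any posterior mean (a Bochner integral of posterior draws) inherits the constant $L$.

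The main obstacle is applying the same lemma to $\sigma_n$, not $\sigma_n^2$, with a Lipschitz constant independent of $n$; a direct calculation for $\sigma_n^2$ involves the posterior precision matrix and blows up with $n$. I would instead derive the $n$-free bound
\[
\sigma_n(x')\le \sigma_n(x)+\sqrt{\mathrm{Var}_{\text{prior}}\bigl(f(x')-f(x)\bigr)}=\sigma_n(x)+\sqrt{k(x,x)+k(x',x')-2k(x,x')},
\]
obtained by predicting $f(x')$ via the optimal posterior predictor of $f(x)$ plus the centred residual $f(x')-f(x)$; smoothness of $k$ then yields $|\sigma_n(x)-\sigma_n(x')|\lesssim\|x-x'\|$ with a constant depending only on the kernel. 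Combined with $\mathbb{E}_X \sigma_n\le(\mathbb{E}_X \sigma_n^2)^{1/2}$, the lemma supplies the pointwise bound on the posterior standard deviation.

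To finish, I substitute the pointwise bounds on $B(z)$ and $\sigma_n(z)$ into the decomposition, use Cauchy--Schwarz $\mathbb{E}_X h\le(\mathbb{E}_X h^2)^{1/2}$ to upgrade the $L^1(P_X)$ quantities in the lemma to the $L^2(P_X)$ quantities controlled by $\Psi_{f^*}(n)$, and apply Jensen's inequality to pull $\mathbb{E}_{\mathcal{D}_n}$ inside the concave map $t\mapsto t^{1/[2(d+1)]}$. Each of the four summands is then bounded by $C L^{d/(d+1)} a^{-1/(d+1)}\Psi_{f^*}(n)^{1/[2(d+1)]}$; collecting factors of $L$, $1/a$, and $1/\alpha$ yields the stated $\tfrac{C}{\alpha}(L+1/a)\Psi_{f^*}(n)^{1/[2(d+1)]}$ bound.
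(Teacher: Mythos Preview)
Your argument is correct but takes a genuinely different route from the paper's. The paper does \emph{not} separate the posterior into bias and standard deviation. Instead, it applies the ball argument directly to each posterior draw $f$: from the Van der Vaart--van Zanten bound on $\mathbb{E}_{\mathcal{D}_n}\int\|f-f^*\|_{L^2(P_X)}^2\,d\Pi_n$, it lower-bounds $\int_{\mathcal{C}}|f-f^*|\,p_X$ by $a\,\mathrm{Vol}(\mathcal{B}_\delta)\bigl[\,|G_{x_0}(T)-G^*_{x_0}(T)|-8\delta L\,\bigr]$ (Lipschitz on balls of radius $\delta$ around $x_0$ and $T(x_0)$), integrates this over the posterior, and then extracts the $\alpha$-quantile from the resulting posterior mean of $|G_{x_0}(T)-G^*_{x_0}(T)|$ via the layer-cake identity $\mathbb{E}|Z|=\int_0^1 F^{-1}_{|Z|}(\tilde\alpha)\,d\tilde\alpha$, truncating the integral at $\alpha$ (this is where the $1/\alpha$ arises). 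The exponent $1/[2(d+1)]$ then comes from balancing $\delta$.

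The trade-offs: the paper's route never needs your auxiliary fact that $\sigma_n$ is Lipschitz with an $n$-free constant, and in principle does not use Gaussianity of the posterior at all---it would go through for any Bayesian regressor satisfying the contraction bound. Your route, by contrast, makes the $1/\alpha$ factor completely transparent (it is literally the Gaussian quantile bound $|\Phi^{-1}(\alpha)|\lesssim 1/\alpha$), is more modular (bias and spread are controlled independently), and your Lipschitz bound on $\sigma_n$---which follows cleanly from (A5) via $\mathrm{Var}_{\text{prior}}(f(x')-f(x))\le L^2\|x-x'\|^2$ and the fact that Gaussian conditioning contracts variance---is a nice standalone observation. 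Both proofs invoke the same ``small-ball'' mechanism (Lipschitz plus $p_X\ge a$) to trade integrated error for pointwise error at the cost of the $1/(d+1)$ exponent; they differ only in \emph{what} they localize (posterior draws versus posterior moments) and in how the quantile is reached.
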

where constant $C$ depends on the prior and density $p_X$ and we define:
\[ \Psi_{f}(n) :=  \begin{cases} 
      \big[\psi_{f}^{-1}(n)\big]^2  & \hspace*{-8mm} \text{ if } \psi_{f}^{-1}(n) \le n^{-d/(4\rho + 2d)}  \\
      n \cdot [\psi_{f^*}^{-1}(n)]^{(4\rho + 4d)/d} & \text{ otherwise }  \\
   \end{cases} 
\]
$\psi_{f^*}^{-1}(n)$ is the (generalized) inverse of $\psi_{f^*}(\epsilon) := \frac{\phi_{f^*}(\epsilon)}{\epsilon^2}$ which depends on the concentration function
$\displaystyle \phi_{f^*}(\epsilon) = \inf_{h \in \mathcal{H}_k : || h - f^*||_\infty < \epsilon} ||h||^2_k - \log \Pi \big( f: ||f ||_\infty < \epsilon \big)$.  $\phi_{f^*}$ measures how well the RKHS of our GP prior $\mathcal{H}_k$ approximates $f^*$ (see \cite{VanderVaart2011} for more details).   The expectation $ \mathbb{E}_{\mathcal{D}_n}$  is over the distribution of the data $\{(X^{(i)}, Y^{(i)})\}_{i=1}^n$.   Importantly, Theorem \ref{thm:close} does not assume anything about the  true relationship $f^*$, and the bound depends on the distance between $f^*$ and our prior.   When $f^*$ is a $\rho$-smooth function, a typical bound is given by $\psi_{f^*}^{-1}(n) = \mathcal{O}( n^{- \min\{ \nu, \rho \}/(2\nu + d)})$ if $k$ is the Mat\'ern kernel with smoothness parameter $\nu$.  When $k$ is the squared exponential kernel and $f^*$ is $\beta$-regular (in Sobolev sense), $\psi_{f^*}^{-1}(n) = \mathcal{O}((1 / \log n)^{\beta/2 - d/4})$   \citep{VanderVaart2011}.  

\begin{thm} \label{thm:popclose}  Under the assumptions of Theorem \ref{thm:close}, for any $T$ such that $\Pr(T(X) \in \mathcal{C}) = 1$:
\begin{align*}
& \mathbb{E}_{\mathcal{D}_n} \Big| F^{-1}_{G_n(T)}(\alpha) - G_X^*(T)  \Big| \\
& \le \frac{C}{\alpha} \Big[L\sqrt{ \frac{d}{n} } +  \Big(L + \frac{1}{a} \Big)  \Psi_{f^*}(n)^{\frac{1}{2(d+1)}}   \Big]
\end{align*}
\end{thm}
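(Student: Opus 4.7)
The plan is to reduce Theorem \ref{thm:popclose} to Theorem \ref{thm:close} via a triangle inequality and a Monte Carlo bound. Define the oracle empirical population gain $G_n^*(T) := \frac{1}{n} \sum_{i=1}^n [f^*(T(x^{(i)})) - f^*(x^{(i)})]$ and split:
\begin{equation*}
|F^{-1}_{G_n(T)}(\alpha) - G_X^*(T)|  \le  |F^{-1}_{G_n(T)}(\alpha) - G_n^*(T)| + |G_n^*(T) - G_X^*(T)|.
\end{equation*}
The second, Monte Carlo, term is the easiest: since $T(x), x \in \mathcal{C} = [0,1]^d$ and $f^*$ is $L$-Lipschitz, each summand of $G_n^*(T)$ lies in $[-L\sqrt{d},L\sqrt{d}]$, is i.i.d.\ under $p_X$, and has mean $G_X^*(T)$. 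So $\text{Var}(G_n^*(T)) \le L^2 d/n$ and Jensen's inequality yields $\mathbb{E}_{\mathcal{D}_n}|G_n^*(T) - G_X^*(T)| \le L\sqrt{d/n}$, matching the first summand of the claimed bound.

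For the first term I would exploit the Gaussian structure. Under the GP posterior, $G_n(T) = \frac{1}{n}\sum_i G_{x^{(i)}}(T)$ is Gaussian with mean $\frac{1}{n}\sum_i \mu_i$ (where $\mu_i := \mathbb{E}[G_{x^{(i)}}(T) \mid \mathcal{D}_n]$) and standard deviation $\sigma_n$. By Cauchy--Schwarz applied to $\text{Cov}(G_{x^{(i)}}(T), G_{x^{(j)}}(T))$, one obtains $\sigma_n \le \frac{1}{n}\sum_i \sigma_i$ where $\sigma_i^2 := \text{Var}(G_{x^{(i)}}(T) \mid \mathcal{D}_n)$. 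Using $\Phi^{-1}(\alpha) \le 0$ (since $\alpha \le 1/2$) together with (\ref{eq:quantilegp}), I can sandwich
\begin{equation*}
\frac{1}{n}\sum_{i=1}^n [F^{-1}_{G_{x^{(i)}}(T)}(\alpha) - G^*_{x^{(i)}}(T)] \le F^{-1}_{G_n(T)}(\alpha) - G_n^*(T) \le \frac{1}{n}\sum_{i=1}^n [\mu_i - G^*_{x^{(i)}}(T)].
\end{equation*}
Thus $|F^{-1}_{G_n(T)}(\alpha) - G_n^*(T)|$ is controlled by the larger (in absolute value) of the two ends, each of which is at most an average of per-point errors. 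Taking expectation and applying Theorem \ref{thm:close} uniformly in $x$ (once at the specified $\alpha$, and once at $\alpha = 1/2$, which for a Gaussian posterior gives exactly $|\mu_i - G^*_{x^{(i)}}(T)|$) bounds each summand by $\frac{C}{\alpha}(L+1/a)\,\Psi_{f^*}(n)^{1/[2(d+1)]}$. Combining this with the Monte Carlo bound and absorbing the factor $1/\alpha \ge 1$ in front of $L\sqrt{d/n}$ yields the stated inequality.

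The main obstacle is in the last sandwich step: Theorem \ref{thm:close} is stated for deterministic $x, T(x) \in \mathcal{C}$, whereas here the $x^{(i)}$ are part of $\mathcal{D}_n$. I would handle this by conditioning on $x^{(i)}$ (noting $x^{(i)} \in \mathcal{C}$ almost surely by (A\ref{as:first})), applying Theorem \ref{thm:close} with the remaining $n-1$ samples -- whose posterior contraction rate differs from that with $n$ samples only by a lower-order factor absorbed into $C$ -- and then taking the outer expectation; the hypothesis $\Pr(T(X) \in \mathcal{C}) = 1$ guarantees that both evaluation points lie in $\mathcal{C}$ almost surely, which is exactly what Theorem \ref{thm:close} requires.
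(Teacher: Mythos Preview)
Your decomposition differs from the paper's: you split at the oracle empirical gain $G_n^*(T)$, whereas the paper splits at $F^{-1}_{G_X(T)}(\alpha)$, the $\alpha$-quantile of the \emph{population} posterior gain $G_X(T) = \mathbb{E}_X[f(T(X)) - f(X)] \mid \mathcal{D}_n$. The paper then proves two lemmas. The first bounds $\mathbb{E}_{\mathcal{D}_n}\big|F^{-1}_{G_n(T)}(\alpha) - F^{-1}_{G_X(T)}(\alpha)\big|$ via Hoeffding's inequality, using that every $f$ supported by the prior is $L$-Lipschitz so that $f(T(x^{(i)})) - f(x^{(i)}) \in [-L\sqrt d,\, L\sqrt d]$ for all $i$. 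The second bounds $\mathbb{E}_{\mathcal{D}_n}\big|F^{-1}_{G_X(T)}(\alpha) - G_X^*(T)\big|$ by re-running the proof of Theorem~\ref{thm:close} with the per-point error $|G_{x_0}(T) - G^*_{x_0}(T)|$ replaced by its $p_X$-average. Your Monte Carlo bound for $|G_n^*(T) - G_X^*(T)|$ is arguably cleaner than the paper's Hoeffding-on-quantiles argument, and the Gaussian sandwich you set up for the remaining term is correct.

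There is, however, a genuine gap in your last paragraph. The quantities $F^{-1}_{G_{x^{(i)}}(T)}(\alpha)$ and $\mu_i$ are computed from the posterior $f \mid \mathcal{D}_n$, which uses \emph{all} $n$ observations, including $(x^{(i)}, y^{(i)})$. Conditioning on $x^{(i)}$ does not turn this into an $(n-1)$-sample posterior, so you cannot simply ``apply Theorem~\ref{thm:close} with the remaining $n-1$ samples'': that would bound a different quantile (the one based on $f \mid \mathcal{D}_n \setminus \{(x^{(i)}, y^{(i)})\}$), and you would still owe a separate argument relating the $n$- and $(n-1)$-sample posterior quantiles at the point $x^{(i)}$. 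The fix is not to black-box Theorem~\ref{thm:close} but to open its proof: the pointwise inequality there, lower-bounding $\int_{\mathcal{C}}|f - f^*|\,p_X$ by $a\,\mathrm{Vol}(\mathcal{B}_\delta)\big[|G_{x_0}(T) - G^*_{x_0}(T)| - 8\delta L\big]$, holds for every $f$ in the support of the prior and every $x_0 \in \mathcal{C}$ simultaneously. You may therefore substitute $x_0 = x^{(i)}$, or average over $i$, \emph{before} integrating against $\Pi_n(\cdot \mid \mathcal{D}_n)$ and taking $\mathbb{E}_{\mathcal{D}_n}$, after which the rest of the argument goes through unchanged. This is exactly how the paper's second lemma handles the analogous issue for $G_X(T)$.
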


Theorems \ref{thm:close} and \ref{thm:popclose} characterize the rate at which our personalized/population-intervention objectives are  expected to converge to the true improvement (due to contraction of the posterior as $n$ grows).  Since these results hold for all $T$, this implies the maximizer of our intervention-objectives will converge to the true optimal transformation as $n \rightarrow \infty$ (under a reasonable prior).  Complementing these results, Theorem \ref{thm:notbad} in \S\ref{subsec:proofs} ensures that for any $n$: optimizing our personalized intervention objective corresponds to improving a lower bound on the true improvement  with high probability, when $\alpha$ is small  and  $f^*$ belongs to the RKHS of our prior.  In this case, the optimal transformation inferred by our approach only worsens the actual expected outcome with low probability.

\section{Results}
\S\ref{supsec:simulation} contains an analysis of our approach on simulated data from simple covariate-outcome relationships.  The average improvement produced by our chosen interventions rapidly converges to the best possible value with increasing $n$.  In these experiments,  sparse-interventions consistently alter the correct feature subset, and proposed transformations under our conservative $\alpha = 0.05$ criterion are much  more rarely harmful than those suggested by optimizing the posterior mean function (which ignores uncertainty).

\subsection{Gene Perturbation}

Next, we applied our method to search for population interventions in observational yeast gene expression data  from \citet{Kemmeren2014large}. We evaluated the effects of proposed interventions (restricted to single gene knockouts) over a set $X$ of 10 transcription factors ($n=161$) with the goal of down-regulating each of a set of 16 small molecule metabolism target genes, $Y$.  Results for all methods are compared to the actual expression change of the target gene found experimentally under individual knockouts of each transcription factor in $X$. Compared to marginal linear regressions and multivariate linear regression, our method's uncertainty prevents it from proposing harmful interventions, and the interventions it proposes are optimal or near optimal (Figure~\ref{fig:geneExpression}).

Insets (a) and (b) in Figure~\ref{fig:geneExpression} show empirical marginal distributions between target gene \emph{TSL1} and members of $X$ identified for knockout by our method (\emph{CIN5}) and marginal regression (\emph{GAT2}).  From the linear perspective, these relationships are fairly indistinguishable, but only \emph{CIN5} displays a strong inhibitory effect in the knockout experiments.  Inset (c) shows the empirical marginal for a harmful intervention proposed by multivariate regression for down-regulating \emph{GPH1}, where the overall correlation
is significantly positive, but the few lowest expression values (which influence our GP intervention objective the most) do not provide strong evidence of a large knockdown effect.

\vspace*{-2mm}
\begin{figure}[h!] 
\hspace*{-2mm}
\includegraphics[trim=0in 7.775in 0in 0in, clip=true, width=1.02\textwidth]{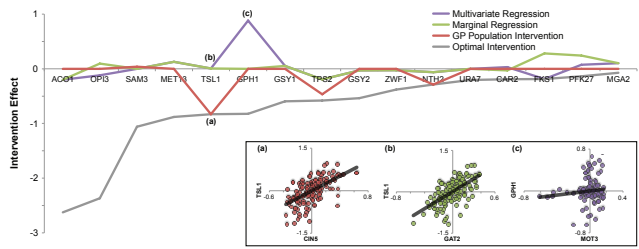}
\vspace*{-5mm} 
\caption{Actual effects of proposed interventions (single gene knockout) over a set transcription factors on down-regulation of each of a set of 16 small molecule metabolism target genes.}
\label{fig:geneExpression} 
\end{figure}

\subsection{Writing Improvement}
\label{sec:writing}

Finally, we apply our personalized intervention methodology to the task of transforming a given news article into one which will be more widely-shared on social media.  We use a dataset from  \cite{Fernandes2015} containing various features about individual Mashable articles along with their subsequent popularity in social networks (detailed description/results for this analysis in \S\ref{supsec:writing}). We train a GP regressor on 5,000 articles labeled with popularity-annotations and evaluate sparse interventions on a held-out set of 300 articles based on changes they induce in article \emph{benchmark popularity} (defined in  \S\ref{supsec:writing}).  When $\alpha = 0.05$, the average benchmark popularity increase produced by our personalized intervention methodology is 0.59, whereas it statistically significantly decreases to 0.55 if $\alpha = 0.5$ is chosen.  Thus, even given this large sample size, ignoring uncertainty appears detrimental for this application, and $\alpha = 0.5$ results in 4 articles whose benchmark popularity worsens post-intervention (compared to only 2 for $\alpha = 0.05$).  Nonetheless, both methods generally produce very beneficial improvements in this analysis, as seen in Figure \ref{fig:writing}.

As an example of the personalization of proposed interventions, our method ($\alpha = 0.05$) generally proposes different sparse interventions for articles in the Business category vs.\ the Entertainment category.  On average, the sparse transformation for business articles uniquely advocates decreasing global sentiment polarity and increasing word count (which are not commonly altered in the personalized interventions found for entertainment articles), whereas interventions to decrease title subjectivity are uniquely prevalent throughout the entertainment category. These findings appear intuitive (eg.\ critical business articles likely receive more discussion, and titles of popular entertainment articles often contain startling statements written non-subjectively as fact). Interestingly, the model also tends to advise shorter titles for business articles, but increasing the length for entertainment articles. Articles across all categories are universally encouraged to include more references to other articles and keywords that were historically popular.

\section{Misspecified Interventions}
\label{causalconnection}

Our methodology heavily relies on the assumption that the outcome-determining covariate values $\widetilde{x}$ produced through intervention exactly match the desired covariate transformation $T(x)$.  When transformations are only allowed to  alter at most $k < d$ covariates, this requires that we can intervene to alter only this subset without affecting the values of other covariates.  If $T$ specifies a sparse change affecting only a subset of the covariates $\mathcal{I} \subset \{1,\dots,d\}$, our methods assume the post-treatment value of any non-intervened-upon covariate remains at its initial value (ie.\ $\widetilde{x}_s = x_s \ \forall s \notin  \mathcal{I}$).

In some domains, the covariate-transformation induced via sparse external intervention can only be roughly controlled (eg.\ our gene perturbation example when the profiled genes belong to a common regulatory network).  Let $T_{\mathcal{I}\shortrightarrow z}$ denote a covariate-fixing transformation which sets a subset of covariates in $\mathcal{I} \subset \{1,\dots,d\}$ to constant values $z_\mathcal{I} \in \mathbb{R}^{|\mathcal{I}|}$ across all individuals in the population.  In this section, we consider an alternative assumption under which the intervention  applied in hopes of achieving $T_{\mathcal{I}\shortrightarrow z}$ propagates downstream to affect other covariates outside $\mathcal{I}$ (so there may exist $s \notin  \mathcal{I}$:  $\widetilde{x}_s \neq x_s$), which we formalize as the $\emph{do}$-operation in the causal calculus of \cite{Pearl2000}.  
Here, we suppose the underlying population of $X, Y$ follows a  \emph{structural equation model} (SEM) \citep{Pearl2000}.  
The outcome $Y$ is restricted to be a sink node of the causal DAG, so we can still write $Y = f^*(\widetilde{X}) + \varepsilon$ and maintain the other conditions from \S\ref{sec:intro}.  Rather than exhibiting covariate-distribution $T_{\mathcal{I}\shortrightarrow z}(X)$ with $Y =  f^*(T_{\mathcal{I}\shortrightarrow z}(X)) + \varepsilon$ (as presumed in our methods), the post-treatment population which arises from an intervention seeking to enact transformation $T_{\mathcal{I}\shortrightarrow z}$ is now assumed to follow the distribution specified by $p(X, Y \mid do(X_{\mathcal{I}} = z_{\mathcal{I}}))$.  Note that the $\emph{do}$-operation here is only applied to some nodes in the DAG (variables in subset $\mathcal{I}$) as discussed by \cite{Peters2014}, but its effects can alter the distributions of non-intervened-upon covariates outside of  $\mathcal{I}$ which lie downstream in the DAG.  

\begin{thm} \label{thm:dooperation}
 For some $\mathcal{I} \subseteq \{1, \dots,d\}$,  suppose the condition:  \refstepcounter{assumption} (A\theassumption) \label{as:bestdo} 
$pa(Y) \subseteq \mathcal{I} \ \bigcup \ \text{desc}(\mathcal{I})^C$ holds.
Then, for any covariate-fixing transformation $T_{\mathcal{I}\shortrightarrow z}$:
$\displaystyle \mathbb{E}_X \big[ f^*(T_{\mathcal{I}\shortrightarrow z}(x)) - f^*(x) \big]$ \ and \\  $\mathbb{E}_{\widetilde{x} \sim \text{do}(X_{\mathcal{I}} = z_{\mathcal{I}})} \big[ f^*(\widetilde{x}) \big] - \mathbb{E}_{X}\big[f^*(x) \big] 
$ are equal.
\end{thm}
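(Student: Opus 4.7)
The plan is to reduce the stated equality to showing that the two post-intervention distributions agree on the parents of $Y$, since $Y$ depends only on its parents in the SEM. Because both sides share the additive term $-\mathbb{E}_X[f^*(x)]$, it suffices to prove
\[
\mathbb{E}_X\big[f^*(T_{\mathcal{I}\shortrightarrow z}(X))\big] \;=\; \mathbb{E}_{\widetilde{X}\sim \text{do}(X_{\mathcal{I}} = z_{\mathcal{I}})}\big[f^*(\widetilde{X})\big].
\]
First I would observe that since $Y$ is a sink node and $Y = f^*(\widetilde{X}) + \varepsilon$ under the SEM, $f^*(\widetilde{x})$ can depend only on the coordinates $\widetilde{x}_{pa(Y)}$; any dependence on non-parents would contradict the local Markov property encoded by the DAG. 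This reduces the problem to comparing the joint distribution of $\widetilde{X}_{pa(Y)}$ under the two scenarios.

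Next I would use hypothesis (A\ref{as:bestdo}) to partition $pa(Y) = S_1 \sqcup S_2$ with $S_1 := pa(Y) \cap \mathcal{I}$ and $S_2 := pa(Y) \setminus \mathcal{I} \subseteq \text{desc}(\mathcal{I})^C$. On the left, $T_{\mathcal{I}\shortrightarrow z}$ sets the $S_1$-coordinates to $z_{S_1}$ and leaves the $S_2$-coordinates equal to their pre-treatment values $X_{S_2}$. On the right, the $\text{do}(X_{\mathcal{I}} = z_{\mathcal{I}})$ operation also forces $X_{S_1} = z_{S_1}$, while the $S_2$-coordinates are, by assumption, non-descendants of the intervened set $\mathcal{I}$. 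Writing $g(u, v) := f^*(\cdot)$ with its $S_1$-argument set to $u$ and $S_2$-argument set to $v$ (and the remaining, irrelevant, coordinates arbitrary), both expectations collapse to $\mathbb{E}\big[g(z_{S_1}, X_{S_2})\big]$ provided the marginal law of $X_{S_2}$ is the same in both scenarios.

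The final and most substantive step is to verify that the pre- and post-$\text{do}$ marginal of $X_{S_2}$ coincides. This follows from the standard truncated-factorization formula: intervening via $\text{do}(X_{\mathcal{I}} = z_{\mathcal{I}})$ replaces the structural equations for $X_{\mathcal{I}}$ with constants and leaves all other structural equations unchanged, so any variable that is not a descendant of $\mathcal{I}$ retains its observational distribution (this is Rule 3 of Pearl's do-calculus specialized to non-descendants, cf.\ \citet{Pearl2000}). Since $S_2 \subseteq \text{desc}(\mathcal{I})^C$, the marginal of $X_{S_2}$ is invariant under the intervention, and the two expectations match. The main obstacle is carefully justifying this invariance when the $\text{do}$-operator is applied to a set rather than a single node, but the cited generalization in \citet{Peters2014} handles exactly this case, so the argument goes through cleanly.
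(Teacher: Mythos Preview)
Your proposal is correct and follows essentially the same route as the paper's proof: both reduce to the parent set via the fact that $f^*$ depends only on $\text{pa}(Y)$, partition the parents into $\text{pa}(Y)\cap\mathcal{I}$ (fixed to $z$ in both scenarios) and $\text{pa}(Y)\setminus\mathcal{I}\subseteq\text{desc}(\mathcal{I})^C$, and then invoke invariance of the non-descendant marginal under the $\text{do}$-operation. The paper carries this out via explicit integral manipulations whereas you argue more verbally, but the decomposition and the key invariance step are identical.
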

Here, $\text{pa}(Y)$ denotes the variables which are parents of outcome $Y$ in the underlying causal DAG, and $\text{desc}(\mathcal{I})^C$ is the set of variables which are \emph{not} descendants of variables in subset $\mathcal{I}$.  For the next result, we define: 
$\displaystyle \mathcal{I^*} := \argmin \Big\{ |\mathcal{I}'| \text{ s.t. }  \exists \ T_{\mathcal{I}' \shortrightarrow z} \in  \argmax_{T_{\mathcal{I}\shortrightarrow z} :  |\mathcal{I}| \le k} \hspace*{-0.1mm} \mathbb{E}_X \big[ f^*(T_{\mathcal{I}\shortrightarrow z}(x)) - f^*(x) \big]  \Big\}$ 
as the intervention set corresponding to the optimal $k$-sparse covariate-fixing transformation (where in the case of ties, the set of smallest cardinality is chosen), if transformations were exactly realized by our interventions (which is not necessarily the case in this section).

\begin{thm} \label{thm:bestdofound} Suppose the  underlying DAG satisfies: 
\refstepcounter{assumption} (A\theassumption) \label{as:parentdisconnect} No variable in $\text{pa}(Y)$ is a descendant of other parents, ie. $\nexists \ j \in \text{pa}(Y)$ s.t.\ $j \in \text{desc}(\text{pa}(Y) \setminus \{j\})$.
Then, $\mathcal{I^*}$
satisfies (A\ref{as:bestdo}).
\end{thm}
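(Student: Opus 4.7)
The plan is a two-step reduction. First, show that the minimality built into the definition of $\mathcal{I}^*$ forces $\mathcal{I}^* \subseteq \text{pa}(Y)$. Second, show that assumption (A\ref{as:parentdisconnect}) then immediately upgrades this containment into the desired condition $\text{pa}(Y) \subseteq \mathcal{I}^* \cup \text{desc}(\mathcal{I}^*)^C$.

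For the first step, I would use that $Y$ is a sink of the causal DAG together with the ``exact realization'' semantics of $T_{\mathcal{I}\shortrightarrow z}$ active in the definition of $\mathcal{I}^*$. In an SEM with $Y$ a sink, the mean function factors as $f^*(\widetilde{x}) = f^*(\widetilde{x}_{\text{pa}(Y)})$, i.e.\ $f^*$ depends on its input only through the coordinates indexed by $\text{pa}(Y)$. Consequently, for any covariate-fixing transformation the population objective $\mathbb{E}_X[f^*(T_{\mathcal{I}\shortrightarrow z}(x)) - f^*(x)]$ depends only on $\mathcal{I} \cap \text{pa}(Y)$ and on the corresponding coordinates of $z$. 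Therefore, whenever some $T_{\mathcal{I}^*\shortrightarrow z^*}$ attains the maximum that defines $\mathcal{I}^*$, the reduced transformation $T_{\mathcal{I}^*\cap\text{pa}(Y)\shortrightarrow z^*|_{\text{pa}(Y)}}$ attains the same maximum and has support of no larger cardinality (and in particular still satisfies $|{\cdot}| \le k$). The $\argmin$ over cardinalities in the definition of $\mathcal{I}^*$ then forces $|\mathcal{I}^* \cap \text{pa}(Y)| = |\mathcal{I}^*|$, i.e.\ $\mathcal{I}^* \subseteq \text{pa}(Y)$.

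For the second step, fix any $j \in \text{pa}(Y)$; I want $j \in \mathcal{I}^* \cup \text{desc}(\mathcal{I}^*)^C$. The case $j \in \mathcal{I}^*$ is trivial. If $j \notin \mathcal{I}^*$, then by step one $\mathcal{I}^* \subseteq \text{pa}(Y)\setminus\{j\}$, and by monotonicity of $\text{desc}(\cdot)$ in its argument this gives $\text{desc}(\mathcal{I}^*) \subseteq \text{desc}(\text{pa}(Y)\setminus\{j\})$. Assumption (A\ref{as:parentdisconnect}) says precisely $j \notin \text{desc}(\text{pa}(Y)\setminus\{j\})$, hence $j \notin \text{desc}(\mathcal{I}^*)$, i.e.\ $j \in \text{desc}(\mathcal{I}^*)^C$. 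This verifies (A\ref{as:bestdo}) for $\mathcal{I}^*$.

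The main obstacle is formalizing step one, since it is the only place where the $\argmin$-of-$\argmax$ structure is used and where the distinction between the ``exact realization'' regime (used to define $\mathcal{I}^*$) and the $do$-operation regime (the setting of Theorem \ref{thm:dooperation}) matters. One must carefully argue that dropping non-parent indices from a candidate optimal $\mathcal{I}$ leaves the population objective unchanged and preserves the budget $|\mathcal{I}|\le k$, so that minimality really does yield $\mathcal{I}^* \subseteq \text{pa}(Y)$. Once that is in hand, step two is essentially a one-line consequence of set monotonicity of $\text{desc}(\cdot)$ combined with (A\ref{as:parentdisconnect}).
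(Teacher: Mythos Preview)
Your proposal is correct and follows essentially the same approach as the paper: first use that $f^*$ depends only on coordinates in $\text{pa}(Y)$ together with the minimum-cardinality requirement to force $\mathcal{I}^* \subseteq \text{pa}(Y)$, then invoke (A\ref{as:parentdisconnect}) and monotonicity of $\text{desc}(\cdot)$ to conclude that any parent not in $\mathcal{I}^*$ lies in $\text{desc}(\mathcal{I}^*)^C$. The only cosmetic difference is that the paper phrases the second step as a global dichotomy ($\text{pa}(Y)\subseteq\mathcal{I}^*$ versus $\mathcal{I}^*\subsetneq\text{pa}(Y)$), whereas you argue element-by-element over $j\in\text{pa}(Y)$; your version is arguably cleaner and more explicit, but the underlying idea is identical.
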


In the absence of extremely strong interactions between variables in $\text{pa}(Y)$, the equality of Theorem \ref{thm:dooperation} will also hold for $\mathcal{I^*}$ if $| \text{pa}(Y) | \le k$.  For settings where sparse interventions elicit unintentional $do$-effects and the causal DAG meets  condition (A\ref{as:parentdisconnect}), Theorems \ref{thm:dooperation} and \ref{thm:bestdofound} imply that, under complete certainty about $f^*$, the (minimum cardinality) maximizer of our covariate-fixing intervention objective corresponds to an transformation that produces an equally good outcome change when the corresponding intervention is actually realized as a $do$-operation in the underlying population.  Combined with Theorem \ref{thm:popclose}, 
our results ensure that, even in this misspecified setting, the empirical maximizer of our sparse covariate-fixing intervention objective (\ref{eq:empiricalpolicy})  produces (in expectation as $n \rightarrow \infty$) beneficial interventions for populations whose underlying causal relationships satisfy certain conditions.

Next, we empirically investigate how effective  our methods are in this misspecified SEM setting, where a proposed sparse population transformation is actually realized as a \emph{do}-operation and can therefore  unintentionally affect other covariates in the post-intervention population.  We generate data from an underlying linear \emph{non}-Gaussian SEM, and where $Y$ is a sink node in the corresponding causal DAG  (see \S\ref{supsec:SEMdetails} for details).  Our approach to identify a beneficial sparse population intervention is compared with inferring the complete SEM using the LinGAM estimator of \cite{Shimizu2006} and subsequently identifying the optimal single-node \emph{do}-operation in the inferred SEM.  Note that LinGAM is explicitly designed for this setting, while both our method and the relied-upon Gaussian Process model are severely misspecified.  

Figures \ref{fig:lingam}A and \ref{fig:lingam}B demonstrate that the inferred best single-variable shift population intervention (under constraints on the magnitude of the shift) matches the performance the interventions suggested  by LinGAM (except for in rare cases with tiny sample size) when the proposed interventions are evaluated as \emph{do}-operations in the true underlying SEM.  Thus, we believe a supervised learning approach like ours is preferable in practical applications where interpreting the underlying causal structure is not as important as producing good outcomes (especially for higher  dimensional data where estimation of the causal structure becomes difficult \citep{Peters2014}).

\begin{figure}[h!] 
\centering
{\def\arraystretch{-0.3}
\begin{tabular}{c c} \hspace*{-3mm} 
 \includegraphics[width=0.49\textwidth]{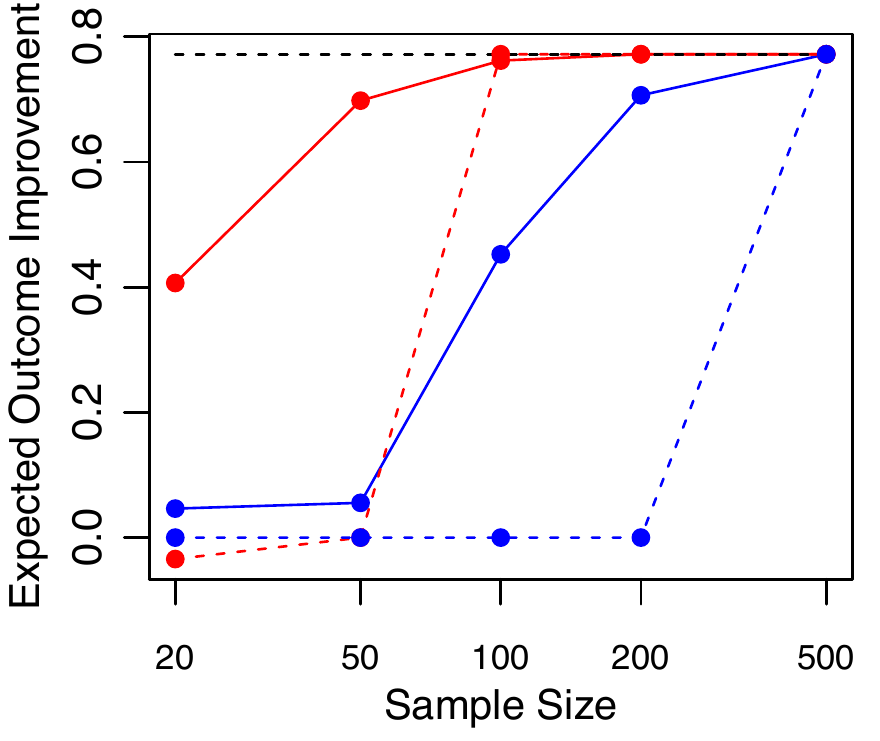} \hspace*{-3mm} 
& \hspace*{-1mm} \includegraphics[width=0.49\textwidth]{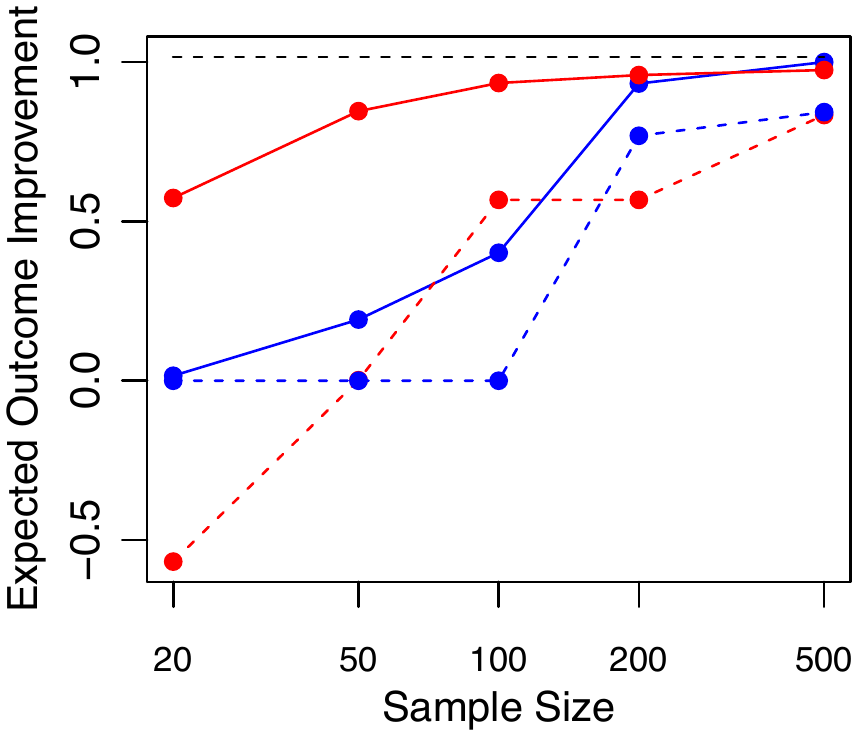} \\[0.25em] 
(A) SEM$_A$ \hspace*{0mm} & \hspace*{0mm} (B) SEM$_B$
\end{tabular} \vspace*{-2mm}}
\caption{The average (solid) and $0.05^{\text{th}}$ quantile (dashed) expected outcome change produced by our method (red) vs LinGAM (blue) over 100 datasets drawn from two underlying SEMs chosen by  \cite{Shimizu2006}.  The black dashed line indicates the best possible improvement in each case.}
\label{fig:lingam}
\end{figure}

The assumption of sparse interventions realized as a $do$-operation (as defined by \cite{Peters2014}) may also be an inappropriate in many domains, particularly if off-target effects of interventions are explicitly mitigated via external controls.  To appreciate the intricate nature of assumptions regarding  non-intervened-upon variables, consider our example of modeling text documents represented using two features: polarity and word count.   A desired transformation to increase the text's polarity can be accomplished by inserting additional positive adjectives, but such an intervention also increases articles' word count. Alternatively, polarity may be identically increased by replacing words with more positive alternatives, an external intervention which would not affect the word count (and thus follows the assumptions of our framework).


\section{Discussion}
\label{sec:discussion}
This work introduces methods for directly learning beneficial interventions from purely observational data without treatments.  While this objective is, strictly speaking, only possible under stringent assumptions, our approach performs well in both intentionally-misspecified and complex real-world settings.  As supervised learning algorithms grow ever more popular, we expect intervention-decisions in many domains will increasingly rely on predictive models.  Our conservative definition of the optimal intervention provides a principled approach to handle the inherent uncertainty in these settings due to finite data.  Able to employ any Bayesian regressor, our ideas are widely applicable, considering practical types of interventions that can either be personalized or enacted uniformly over a population.

\vspace*{3mm}

\textbf{Acknowledgements:  } We thank David Gifford for helpful comments. DR was funded by an award from IARPA (under research contract 2015-15061000003).

\clearpage \newpage 
\subsection*{References}
\bibliographystyle{agsm}
{\bibliography{InterventionOptBibliography}}


\clearpage \newpage \beginsupplement
\onecolumn
\begin{center}
{\huge \bf Supplementary Material}
\end{center}

\begingroup 
\let\orignumberline\numberline
\def\numberline#1{\orignumberline{#1}\kern1ex}
\setcounter{tocdepth}{0}
\tableofcontents
\addtocontents{toc}{\setcounter{tocdepth}{3}}
\endgroup

\section{Gaussian Process Regression}
\label{sec:gp}

Gaussian Process regression \citepsi{Rasmussen2006si} adopts a prior under which $f(x^{(1)}), \dots, f(x^{(n)})$ follow multivariate Gaussian distribution N$(\mathbf{m}_{n}, \mathbf{K}_{n,n})$ for any collection $\{x^{(i)}\}_{i=1}^n$.  The model is specified by a prior mean function $m : \mathbb{R}^d \rightarrow  \mathbb{R}$ and positive-definite covariance function $k :  \mathbb{R}^d \times  \mathbb{R}^d  \rightarrow  \mathbb{R}$ which encodes our prior belief regarding properties of the underlying relationship between $X$ and $Y$ (such as smoothness or periodicity).  Here, the  vector  $\mathbf{m}_{n} \in \mathbb{R}^n$ denotes the evaluation of function $m$ at each point $\{x^{(i)}\}_{i=1}^n$, and $\mathbf{K}_{n,n}$ denotes the matrix whose $i,j^{\text{th}}$ component is $k(x^{(i)}, x^{(j)})$.  Given test input points $x_*^{(1)},\dots,x_*^{(n_*)} \in \mathbb{R}^d$ in addition to training data $\mathcal{D}_n$, we additionally define:  $\mathbf{f}_* := [f(x_*^{(1)}), \dots, f(x_*^{(n_*)})]$, $\mathbf{y}_n = [y^{(1)}, \dots, y^{(n)}]$, matrix $\mathbf{K}_{n,*}$ with  $i,j^{\text{th}}$ entry $k(x^{(i)}, x_*^{(j)})$ (where $x^{(i)}$ is the $i^{\text{th}}$ training input), and matrix $\mathbf{K}_{*,*}$ which contains pairwise covariances between test inputs.

Assuming the noise $\varepsilon \sim \text{N}(0, \sigma^2$) is independently sampled for each observation, the posterior for $f$ at the test inputs, $\mathbf{f}_* \mid \mathcal{D}_n$, follows \ $\text{N}(\mathbf{\mu_n}_*, \mathbf{\Sigma_n}_* )$ distribution with the following mean vector and covariance matrix: 
\begin{align*}
\mathbf{\mu_n}_* = \mathbf{m}_{*} + (\mathbf{K}_{n,n} + \sigma^2 \mathbf{I})^{-1} (\mathbf{y}_n - \mathbf{m}_{n}), \ \mathbf{\Sigma_n}_* = \mathbf{K}_{*,*} - \mathbf{K}_{*,n} (\mathbf{K}_{n,n} + \sigma^2 \mathbf{I})^{-1} \mathbf{K}_{n,*}
\end{align*}

Note that our intervention-optimization framework is not specific to this GP model, but can be combined with any algorithm that learns a reasonable posterior for $f$.  While employing a more powerful model should improve the results of our approach, comparing various regressors is not our focus.  Thus, all practical results of our methodology are presented using only the standard GP regression model, under which the posterior distribution over $f$ is given by the above expressions.  In each application presented here, our GP uses the Automatic-Relevance-Determination (ARD) covariance function, a popular choice for multi-dimensional data \citepsi{Rasmussen2006si}:
\begin{equation}
k(x, x') = \sigma_0^2 \cdot \exp \left[-\frac{1}{2} \sum_{s=1}^d \left( \frac{x_s - x_s'}{l_s} \right)^2 \right]
\end{equation}
The ARD kernel relies on length-scale hyperparameters $l_1, \dots, l_d$ which determine how much  $f$ can depend on each dimension of the feature-space.  All hyperparameters of our GP regression model (covariance-kernel parameters $l_1 \dots, l_d$ and $\sigma_0$ (the output variance) as well as the variance of the noise $\sigma^2$) are empirically selected via marginal-likelihood maximization \citepsi{Rasmussen2006si}. In each application, we employ the $0.05^\text{th}$ posterior-quantile ($\alpha = 0.05$) in our method to ensure that with high probability, the intervention it infers to be optimal induces a nonnegative change in expected outcomes.

\section{Algorithmic Details}
\label{supsec:algorithms}

To find an optimal transformation of our regularized objective $J_\lambda$ in (\ref{eq:shiftpopulation}), we employ the proximal gradient method described in \S\ref{sec:algorithms}.  When $\lambda = 0$ and there is no penalty, we instead use Sequential Least Squares Programming \citepsi{Kraft1988si}.  However, the intervention objective $J_\lambda$ may be highly nonconcave.  To deal with  local optima in acquisition functions, Bayesian optimization methods employ heuristics like combining the results of many local optimizers or operating over a fine partitioning of the feature space \citepsi{Shahriari2016si, Lizotte2008}.  We instead propose a continuation technique that solves a series of optimization problems, each of which operates on our objective under a smoothed posterior (and the amount of additional smoothing is gradually decreased to zero).  Excessive  smoothing of the posterior is achieved by simply considering GP models whose kernels are given overly large length-scale parameters.  Each time the amount of smoothing is tapered, we initialize our local optimizer using the solution found at the previously greater smoothing level.  Intuitively, the highly smoothed GP model is primarily influenced by the global structure in the data, and thus our optimization with respect to the posterior of this model is far less susceptible to low-quality local maxima.  
Analysis of a similar homotopy strategy under radial basis kernels has been conducted by \citesi{Mohabi2012si}.

\subsection{Sparse Shift Intervention}
\label{supsec:shiftalgorithm}

Here, we provide an explanatory description of the Sparse Shift Algorithm from \S\ref{sec:algorithms}.
To find the best $k$-sparse population shift intervention, we resort to $\ell_1$ relaxation.  As the $\ell_1$-norm provides the closest convex relaxation to the $\ell_0$ norm, this is a a commonly adopted strategy to avoid combinatorial search in feature selection \citepsi{Bach2012si}.  First, we compute the regularization path over different settings of the penalty $\lambda > 0$ for the following regularized objective:

\begin{equation} \label{eq:shiftpopulation}
J_\lambda(\Delta) :=   F^{-1}_{G_n(\Delta)}(\alpha )  - \lambda ||\Delta||_1
\end{equation}
which is maximized over the feasible set $\mathcal{C}_{\Delta} := \{ \Delta \in \mathbb{R}^d : x + \Delta \in \mathcal{C}_{x} \text{ for all } x \in \mathbb{R}^d \}$ \\
(recall we write $G_n(\Delta) := G_n(T)$ when $T(x) = x+ \Delta$).

Subsequently, we identify the regularization penalty which produces a shift of desired cardinality and
select our intervention set $\mathcal{I}$ as the covariates which receive nonzero shift.  Finally, we optimize the original unregularized objective ($\lambda = 0$) with respect to only the selected covariates in $\mathcal{I}$ to remove bias induced by the regularizer.  Each inner maximization in both the Sparse Shift/Covariate-fixing algorithms is performed via the proximal gradient methods combined with our continuation approach introduced in \S\ref{supsec:algorithms}.

\subsection{Sparse Covariate-fixing Intervention}
\label{supsec:uniformalgorithm}

Another goal is to identify the optimal covariate-fixing intervention which sets $k$ of the covariates to particular fixed constants uniformly across all individuals from the population.  We employ the    forward step-wise selection algorithm described below, as the form of the optimization in this case is not amenable to $\ell_1$-relaxation.  Recall $\mathcal{I} \subseteq \{1,\dots,d\}$ denotes the subset of covariates which are intervened upon, and the covariate-fixing intervention produces vector $T_{\mathcal{I} \shortrightarrow z}(x) \in \mathbb{R}^d$ such that $T_{\mathcal{I} \shortrightarrow z}(x)_s = x_s$ if $s \notin \mathcal{I}$, otherwise  $T_{\mathcal{I} \shortrightarrow z}(x)_s = z_s$ which is a constant chosen by the policy-maker.  This same transformation is applied to each individual in the population, creating a more homogeneous group which share the same value for the covariates in $\mathcal{I}$.  For a given $\mathcal{I}$, the objective function to find the best constants is:
\begin{align*}
& J^{\text{unif}}_\mathcal{I}\big(\{z_s\}_{s \in \mathcal{I}} \big) :=  F^{-1}_{G_n(T_{\mathcal{I}\shortrightarrow z})} (\alpha) \numberthis \label{eq:uniformpopulation} \\
 \text{ with } \ & G_n(T_{\mathcal{I}\shortrightarrow z}) = \frac{1}{n} \sum_{i=1}^n \big[ f(z^{(i)}) - f(x^{(i)}) \big] \mid \mathcal{D}_n \  \text{ where } z_s^{(i)} =  \begin{cases} 
      x^{(i)} & \text{ if } s \notin \mathcal{I} \\
      z_s & \text{ otherwise }  \\
   \end{cases}
\end{align*}
which is maximized over the constraints: $z_s \in \mathcal{C}_s \subseteq \mathbb{R}$ for $s \in \mathcal{I}$.

\noindent\rule[0.5ex]{\linewidth}{1pt} \nopagebreak
\textbf{Sparse Covariate-fixing Algorithm:} Identifies best $k$-sparse covariate-fixing intervention.  \\ \nopagebreak
\noindent\rule[0.5ex]{\linewidth}{1pt}
\textbf{Input:} Dataset $\mathcal{D}_n = \{ (x^{(i)}, y^{(i)}) \}_{i=1}^n$, Posterior $f \mid \mathcal{D}_n$ \\
\textbf{Parameters:} $k \in \{1, \dots, d\}$ specifies the maximal number of covariates which may be set by the covariate-fixing intervention, $\mathcal{C}_1,\dots, \mathcal{C}_d \subseteq \mathbb{R}$ are sets of feasible settings for each covariate.
\begin{enumerate}[1:, topsep=-1.5ex,leftmargin=6mm]
\item  Initialize $\mathcal{I} \leftarrow \varnothing$, \ $\mathcal{U} \leftarrow \{1,\dots, d\}$, \ $J^* \leftarrow 0$
\item \textbf{While} $|\mathcal{I}| < k$:
\item \hspace*{5mm} Set $\displaystyle J^*_s \leftarrow \max_{\mathcal{C}_r : r \in \mathcal{I} \cup \{s\}} J^{\text{unif}}_{\mathcal{I} \cup \{s\}} \big(\{z_r\}_{r \in \mathcal{I} \cup \{s\}} \big)$ \hspace*{5mm} \textbf{for} each $s \in \mathcal{U}$
\item \hspace*{5mm} Find $ s^* \leftarrow \argmax_{s \in \mathcal{U}} \big\{ J^*_s  \big\} $ 
\item \hspace*{5mm} \textbf{If } $J^*_{s^*} > J^*$: \hspace*{5mm} $J^* \leftarrow J^*_{s^*}$, \ $\mathcal{I} \leftarrow \mathcal{I} \cup \{s^*\}$, \ $\mathcal{U} \leftarrow \mathcal{U} \setminus s^*$
\item \hspace*{5mm} \textbf{Else: }  \hspace*{4mm} \textbf{ break } \nopagebreak
\item \textbf{Return: } $\{z_s^*\}_{s \in \mathcal{I}} \leftarrow  \argmax_{\mathcal{C}_s : s \in \mathcal{I}} \  J^{\text{unif}}_{\mathcal{I}} \big(\{z_s\}_{s \in \mathcal{I}} \big)$
\end{enumerate}
\noindent\rule[0.5ex]{\linewidth}{1pt}

\section{Simulations}
\label{supsec:simulation}

\begin{figure}[h!]   
\centering
\begin{tabularx}{\textwidth}{X X} 
\includegraphics[width=0.4\textwidth]{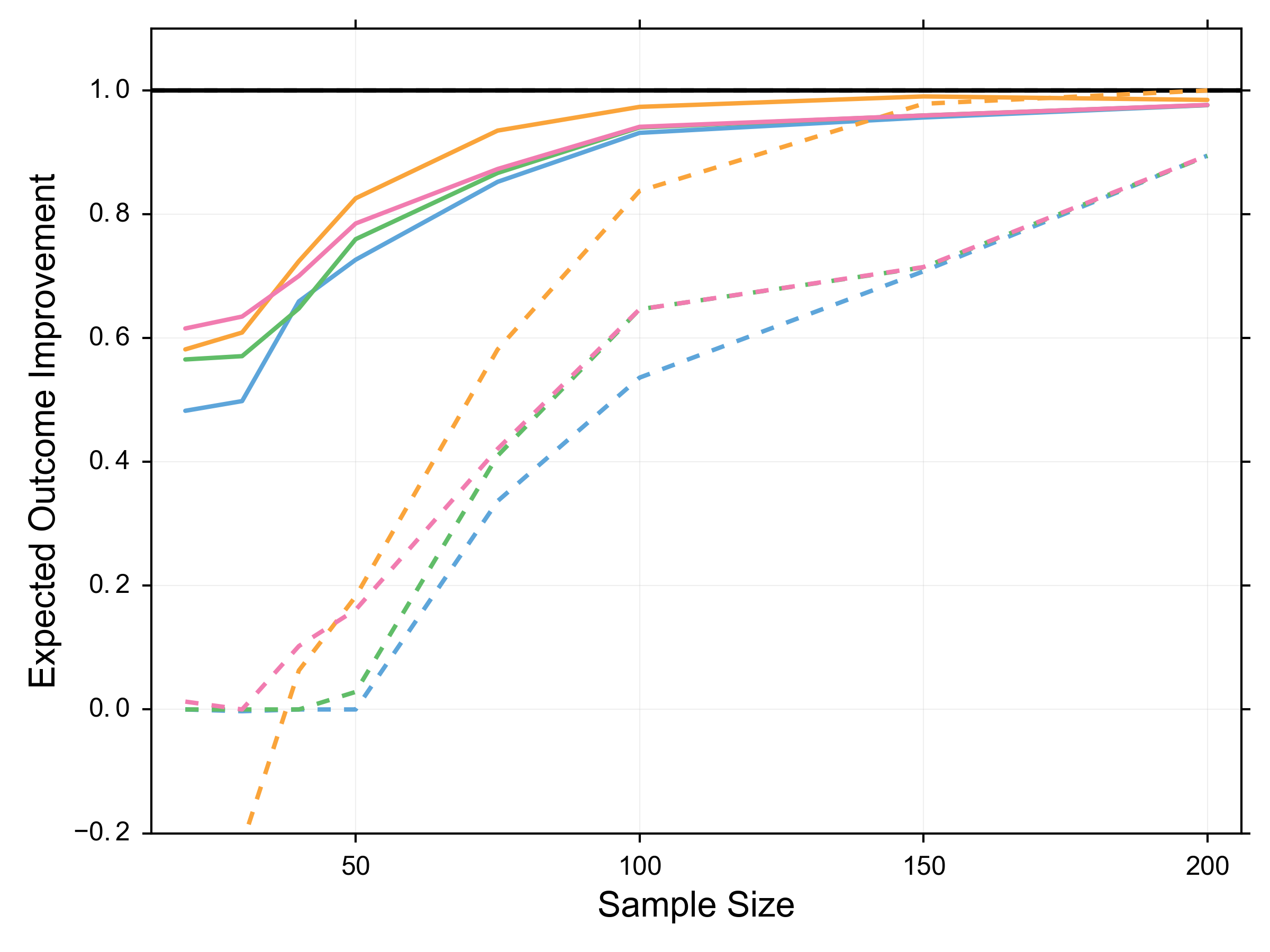} & 
\includegraphics[width=0.4\textwidth]{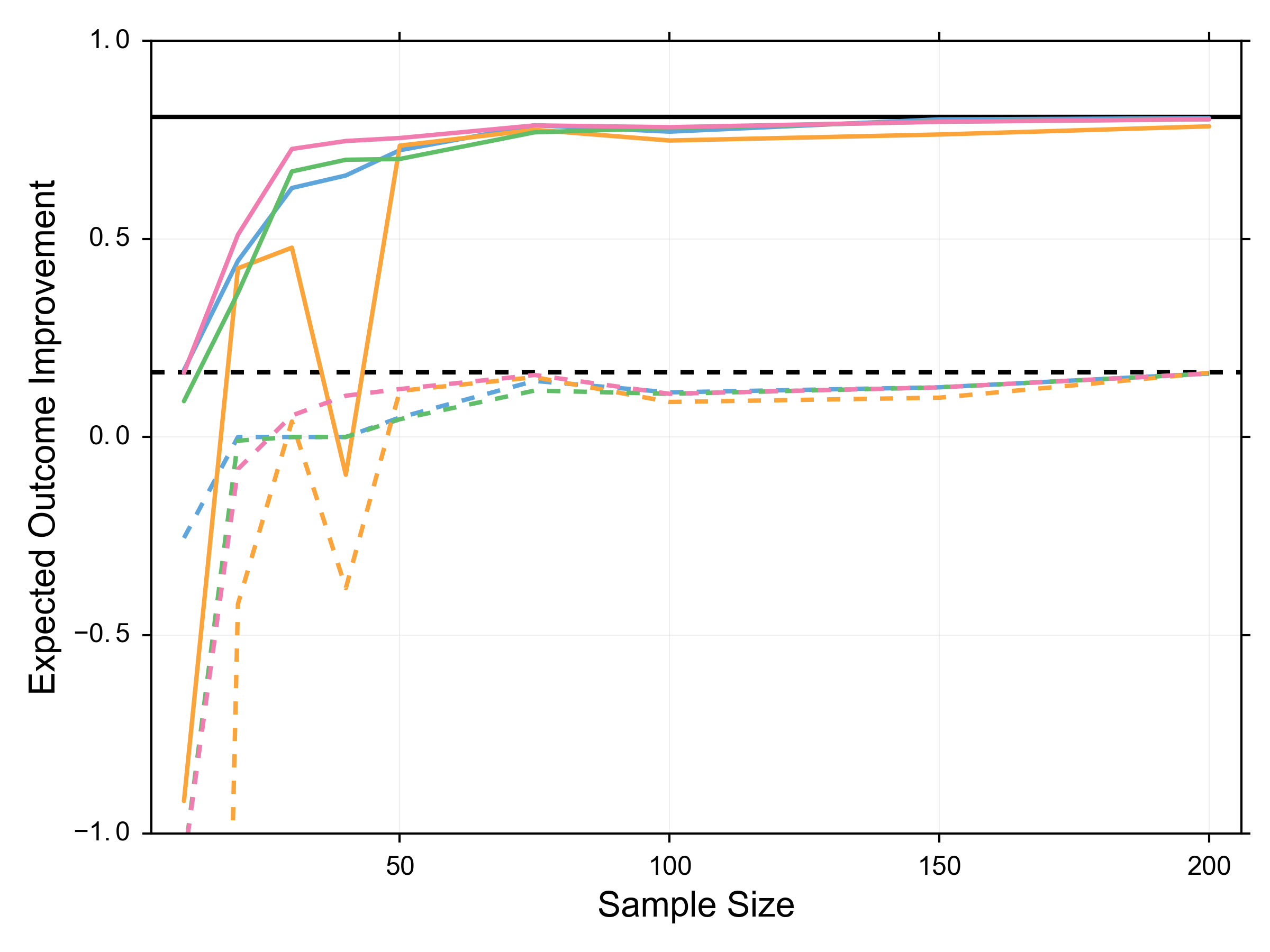} \\ 
(A) Linear: \ $f(X) = 0.3X_1 + 0.7X_2$ &  (B) Quadratic: \  $f(X) =1-X_1^2-X_2^2$ \\
\includegraphics[width=0.4\textwidth]  
{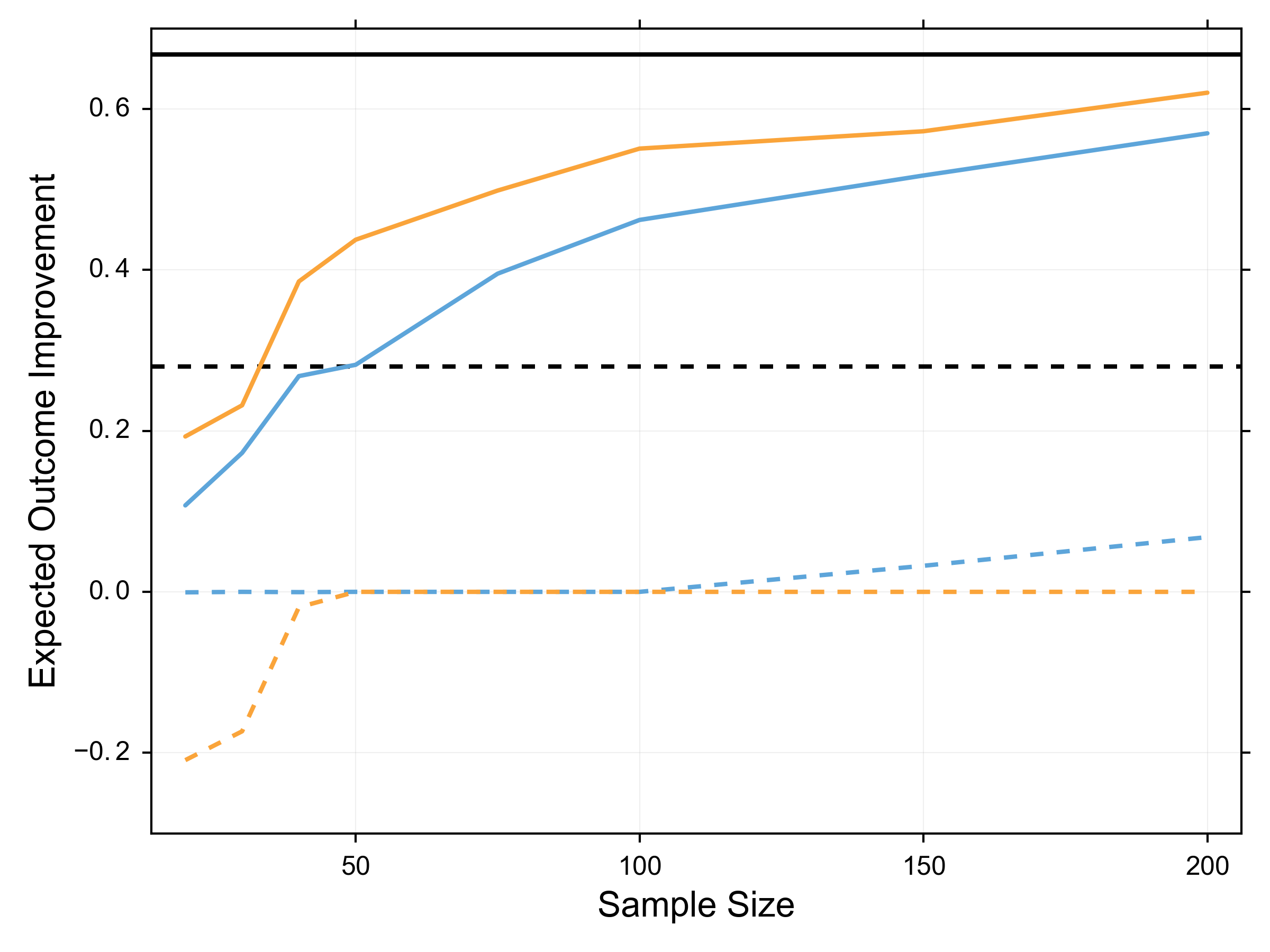} &  \ \ \ \ \ \ \ \ \ \
\includegraphics[width=0.3\textwidth]{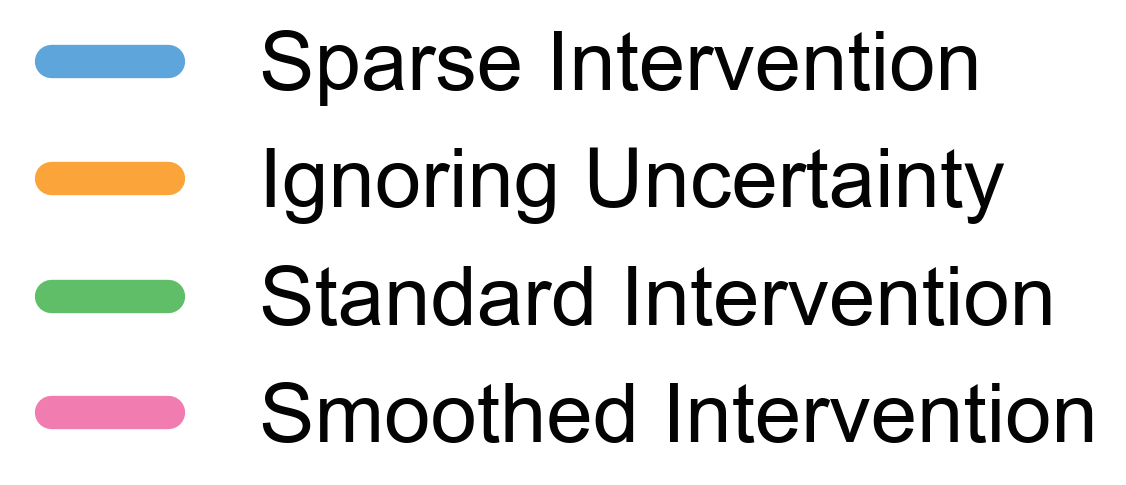}  \\
(C) Product: \ $f(X) =X_1 \cdot X_2$ & 
\end{tabularx}
\vspace*{-0.5mm}
\caption{The mean (solid) and $0.05^{\text{th}}$ quantile (dashed) expected outcome change produced under personalized interventions suggested by various methods, over 100 datasets of each sample size.  Each dataset contains 10-dimensional covariates, with $X_i \sim \text{Unif}[-1, 1]$, and $Y$ is determined by the indicated relationships and additive Gaussian noise ($\sigma = 0.2$).  The black lines indicate the best possible expected outcome change (when the best change depends on which individual received the intervention, the black solid/dashed lines indicates the mean and $0.05^{\text{th}}$ quantile over our 100 trials).}
\label{fig:individualIntervention}
\end{figure}


\begin{figure}[h!] 
\centering
\begin{tabular}{c c} 
\includegraphics[width=0.4\textwidth]{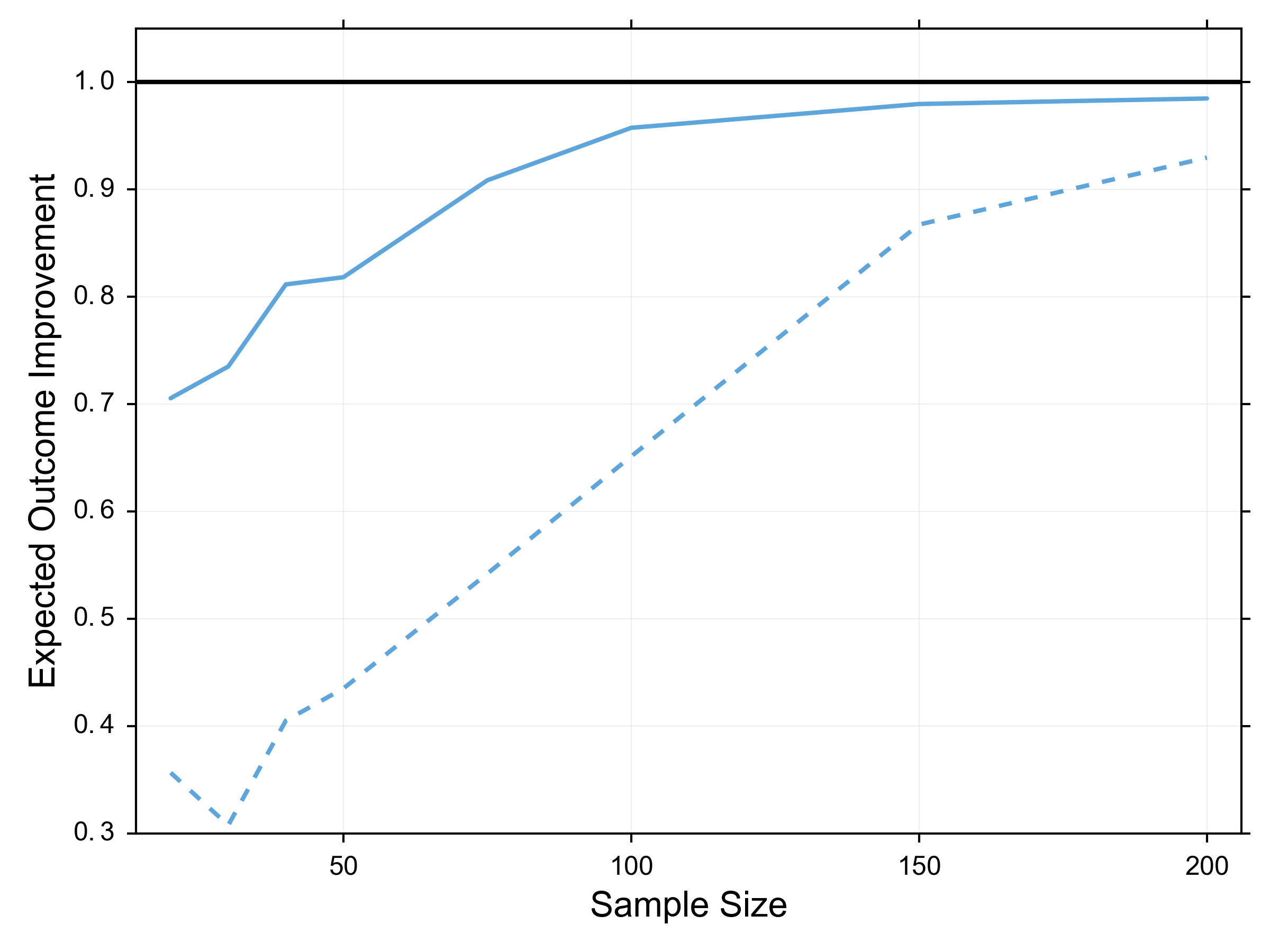}
& \includegraphics[width=0.4\textwidth]{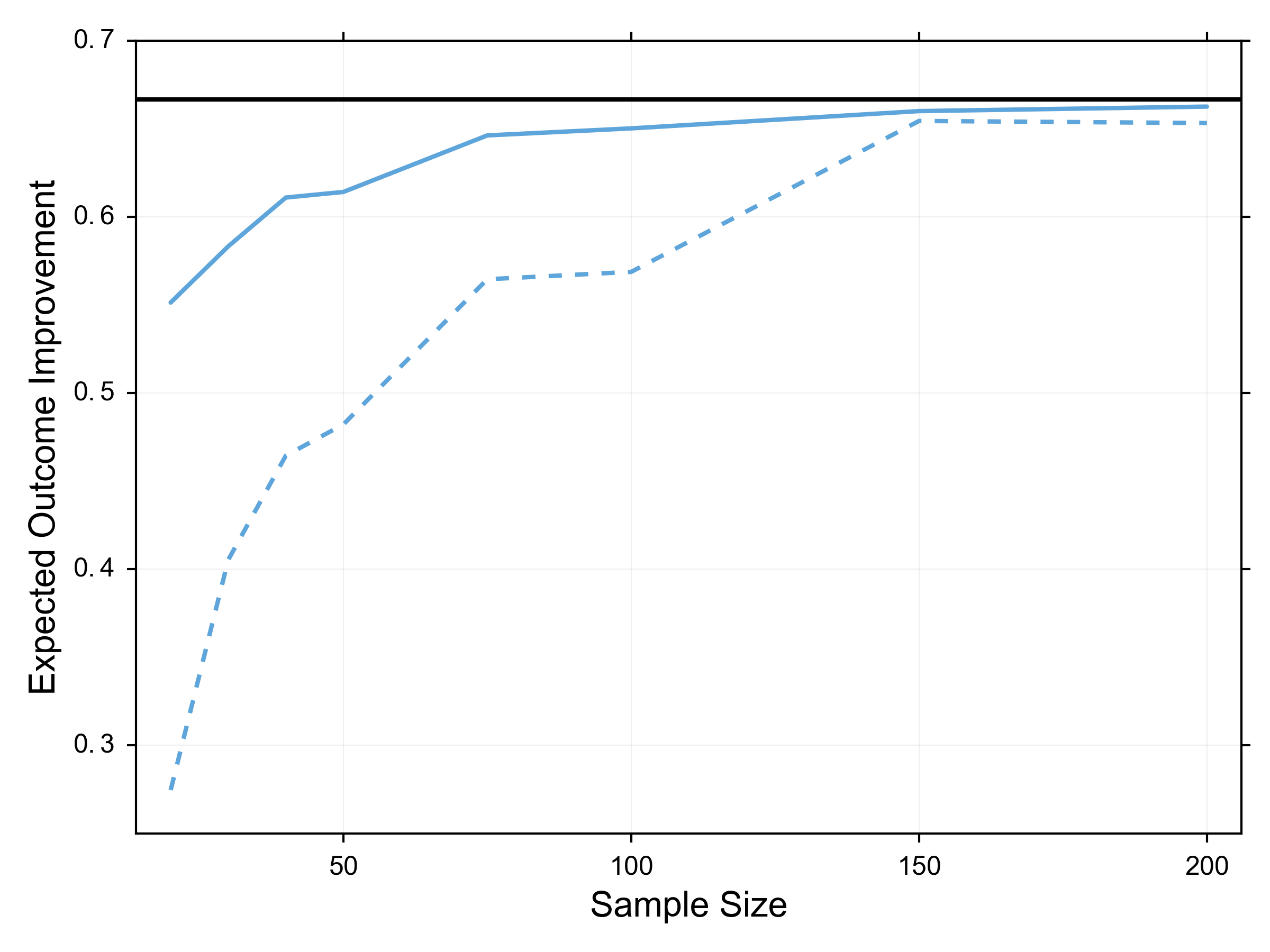} \\ 
(A) Linear: \ $f(X)=0.3X_1 + 0.7X_2$ & (B) Quadratic:  \ $Y=1-X_1^2-X_2^2$ \\
\end{tabular}
\vspace*{-0.5mm}
\caption{The mean (solid) and $0.05^{\text{th}}$ quantile (dashed) expected outcome change produced by our population intervention method, over 100 datasets for each sample size (same setting as in Figure \S\ref{fig:individualIntervention}).  The black line indicates the best possible expected outcome improvement.}
\label{fig:populationIntervention} 
\end{figure}

Over the simulated data summarized in Figure \ref{fig:individualIntervention}, we apply our basic personalized intervention method ($\alpha = 0.05$) with purely local optimization (standard) and our continuation technique (smoothed), which significantly improves results.  For each of the 100 datasets, we randomly sampled a new point (from the same underlying distribution) to receive a personalized intervention.   
The magnitude of each intervention is bounded by 1, except for in data from the quadratic relationship.  We also infer sparse interventions (with a cardinality constraint of 2 for the linear and quadratic relationships, 1 for the product relationship).  When $Y = X_1 \cdot X_2 + \varepsilon$, the optimal (constrained) intervention may drastically vary depending upon the individual's covariate-values, and our algorithm is able to correctly infer this behavior (Simulation C).  Finally, we also apply a variant of our method which entirely ignores uncertainty ($\alpha = 0.5$).  While this approach is on average  better for larger sample sizes, highly harmful interventions are occasionally proposed, whereas our uncertainty-adverse method ($\alpha =0.05$) is much less prone to producing  damaging  interventions (preferring to abstain by returning $T(x) = x$ instead).  This is an invaluable characteristic since interventions generally require effort and are only worth conducting when they are likely to produce a substantial benefit.

Figure \ref{fig:populationIntervention} displays the behavior of both the population shift intervention in the linear setting, and the population covariate-fixing intervention under the quadratic  relationship.  The population intervention is notably safer than the individually tailored variants, producing no negative changes in our experiments.

\vspace*{-3mm}
\clearpage
\subsection{Linear SEM Analysis} \label{supsec:SEMdetails}
\addcontentsline{toc}{subsection}{\nameref{supsec:SEMdetails}}
\vspace*{-1mm}

Here, we suppose that a desired transformation upon variable $s \in \{1,\dots, d\}$ cannot be enacted exactly and the $Y$ which arises post-treatment is distributed according to $do(X_s = \mathbb{E}[X_s] + \Delta)$, where $\mathbb{E}[X_s]$ is the mean of the pre-treatment marginal distribution of the $s$th covariate.  In this case,  \emph{do}-effects can propagate to other covariates which are descendants of $s$ in the DAG because the values of descendant variables are redrawn from the \emph{do}-distribution which arises as a result of shifting $\mathbb{E}[X_s]$.  Because all relationships are linear in our SEMs, the actual expected outcome change resulting from a particular shift (resulting from the corresponding \emph{do}-operation) is easily obtained in closed form.

Our GP framework is applied to the data to infer an optimal 1-sparse shift population intervention (only interventions on a single variable are allowed).  The maximal allowed magnitude of the shift is constrained to ensure the optimum is well-defined (to $\pm 1$ times the standard deviation of each variable in the underlying SEM distribution).
An alternative approach to improve outcomes in contrast to our black-box approach is to apply a causal inference method like LinGAM \citepsi{Shimizu2006si} to estimate the SEM from the data, and then identify the optimal single-variable shift $\Delta_s^*$ in the LinGAM-inferred SEM (since all inferred relationships are also linear, the optimal single-variable shift will be either 0 or the lower/upper allowed shift and we simply search over these possibilities). We compare our approach against LinGAM by evaluating the actual expected outcome change produced by the shift $\Delta^*_s$  proposed by each method (where the actual expected outcome change is found by analytically  performing the $do(X_s = x_s + \Delta^*_s)$ operation in the true underlying SEM) .  

In our experiment, two underlying SEM models  are considered which were used by \citesi{Shimizu2006si} to demonstrate the utility of their LinGAM method (albeit with impractically large sample size = 10,000).  SEM$_A$ is used to refer to the model depicted in Figure 3 of \citesi{Shimizu2006si}, where we define $Y$ as x6 (a sink node in the causal DAG).  SEM$_B$ denotes the underlying model of Figure 4 in the same paper ($Y$ is defined as sink node x7).  The remainder of the variables in each SEM are adopted as our observed covariates $X$.
 
\nopagebreak
This experiment represents an application of our method in a highly misspecified setting.  The true data-generating mechanism differs significantly from assumptions of our GP regressor (output noise is now fairly non-Gaussian, the underlying relationships are all linear while we use an ARD kernel).  Furthermore, an intervention to transform a single covariate incurs a multitude of unintentional off-target effects resulting from the \emph{do}-effects propagating to downstream covariates in the SEM, whereas our method believes only the chosen covariate is changed.  In contrast, this data exactly follows the special assumptions required by LinGAM, and we properly account for inferred downstream \emph{do}-operation effects when identifying the best inferred intervention under LinGAM.  The only disadvantage of the LinGAM method is that it does not know the direction of the causal relationship $X \rightarrow Y$ (although we found it always estimated this direction correctly except on rare occasions with tiny sample sizes of $n = 20$).  

Since LinGAM only estimates linear relations, the best inferred shift-intervention found by this  approach will always be 0 or the minimal/maximal shift allowed for a particular covariate. Searching over these three values for each covariate ensures the actual optimal shift will be recovered if the LinGAM SEM-estimate were correct. However, under our approach, identifying the optimal population shift-intervention requires solving an optimization problem.  Even if the GP regression posterior were to exactly reflect the true data-generating mechanism, our approach might get stuck in a suboptimal local maximum or avoid
\nopagebreak
the minimal/maximal allowed shift  due to too much uncertainty about $f$ in the resulting region of feature-space.  In practice, these potential difficulties do not pose much of an issue for our approach.

\section{Gene Knockout Interventions}
\label{supsec:geneExpression}

The data set used for this analysis contains gene expression levels for a set of wild type (ie.\ `observational') samples, $\mathcal{D}_{obs} \ (n=161)$, as well as for a set of `interventional' samples, $\mathcal{D}_{int}$, in which each individual gene was serially knocked out. In our analysis, we search for potential interventions for affecting the expression of a desired target gene by training our GP regressor on $\mathcal{D}_{obs}$ and determining which knockout produces the best value of our empirical covariate-fixing population intervention objective (for down-regulating the target).
Subsequently, we use $\mathcal{D}_{int}$ to evaluate the actual effectiveness of proposed interventions in the knockout experiments. We only search for interventions present in $\mathcal{D}_{int}$ (single gene knockouts) rather than optimizing to infer optimal covariate transformations.   

As candidate genes for this analysis we used only the 700 genes that \citesi{Kemmeren2014largesi} classified as responsive mutants (at least four transcripts show robust changes in response to the  knockout).  Furthermore, we omitted genes whose expression over the 161 observational samples had standard deviation $< 0.1$.  Out of the transcription factors present in the remaining set of genes, we defined the top 10 factors as our feature set $X$, after ranking the transcription factors by the difference between their expression when they were knocked out in the interventional data and their $0.1^{\text{th}}$ quantile expression level in the observational data.  This was to ensure that our  model would be trained on data that at least resembled the experimental data $\mathcal{D}_{int}$.  The set of genes to down-regulate was simply chosen to be those classified by \citesi{Kemmeren2014largesi} as small molecule metabolism genes that met the minimum standard deviation requirement in their observational expression marginal distribution.  The resulting set was 16 target genes, and the (negative) expression of each of was treated as an outcome $Y$ in our analyses.

Each method evaluated in this analysis was to propose an intervention (single gene knockout) to down-regulate the expression of each target gene (separately).  Once a gene to knock out was proposed, this  intervention was evaluated by comparing the resulting expression of the target when the proposed knockout was actually performed in the experimental data  $\mathcal{D}_{int}$.  This expression level could then be compared to the `optimal' choice of gene from $X$ to intervene upon (the gene in $X$ whose  knockout produced the largest down-regulation of the target in $\mathcal{D}_{int}$).  

We compared our approach against two methods popularly used to draw conclusions about affecting  outcomes in the sciences.  First, we applied a multivariate regression analysis in which a linear regression model was fit to the observations of $(X, Y)$ in $\mathcal{D}_{obs}$.  The best gene to knockout was inferred on the basis of the regression coefficients and expression values (if no beneficial regression coefficient was found significant at the 0.05 level under the standard $t$-test, then no intervention was proposed).  Second, we performed a marginal analysis in which separate univariate linear regression models were fit to $(X_1, Y), \dots, (X_d, Y)$, and the best knockout was again inferred on the basis of the regression coefficients and expression values (again, no intervention was recommended if there was no statistically significant beneficial regression coefficient at the 0.05 level, after correcting for multiple testing via the False Discovery Rate).

Figure~\ref{fig:geneExpression} compares the results produced by these methods to the optimal intervention over $X$ for down-regulating each $Y$, as found in the experimental data $\mathcal{D}_{int}$.  Of the 16 small molecule metabolism target genes tested, in three cases our method proposed an intervention which was found to be optimal or near optimal in $\mathcal{D}_{int}$, while in the remaining cases, the model uncertainty causes the method not to recommend any intervention (except for one very minorly harmful intervention for target \emph{SAM3}).   On the other hand, neither form of linear regression proposed effective interventions for any target other than \emph{FKS1}, and in some cases, the linear regressors proposed counterproductive interventions that up-regulated the target. This highlights the importance of a model that properly accounts  uncertainty when evaluating potential interventions.

\section{Interventions to Improve Article Popularity}
\label{supsec:writing}

We demonstrate our personalized intervention methodology in a setting with rich nonlinear underlying relationships.  The data consist of 39,000 news articles published by Mashable around 2013-15 \citepsi{Fernandes2015si}. Each article is annotated with the number of shares it received in social networks (which we use as our outcome variable after log-transform and rescaling).  A multitude of features have been extracted from each article (eg.\ word count, the category such as ``tech'' or ``lifestyle'', keyword properties), many of which \citesi{Fernandes2015si} produced using natural language processing algorithms (eg.\ subjectivity, polarity, alignment with topics found by Latent Dirichlet Allocation).  After removing many highly redundant covariates, we center and rescale all variables to unit-variance (see Table \ref{tab:textfeatures} for a complete description of the 29 covariates used in this analysis).  
 
We randomly partition the articles into 3 disjoint groups: a \emph{training} set (5,000 articles on which scaling-factors are computed and our GP regressor is trained), an \emph{improvement} set (300 articles we find interventions for), and a \emph{held-out} set (over 34,000 articles used for evaluation).  A large group is left out for validation to ensure there are many near-neighbors for any given article, so we can reasonably estimate the true expected popularity given any setting of the article-covariates.  Subsequently, a basic GP regression model is fitted to the training set.  As the predictive power of our GP regressor did not measurably benefit from ARD feature-weighting, we simply use the  squared exponential kernel.  Over the held-out articles, the Pearson correlation between the observed popularity and the GP (posterior mean) predictions is 0.35.  Furthermore, there is a highly significant ($p < 8 \cdot 10^{-41}$) positive correlation of 0.07 between the model's predictive variance and the actual squared errors of GP predictions over this held-out set.  Our model is thus able to make reasonable predictions of popularity based on the available covariates, and its  uncertainty estimates tend to be larger in areas of the feature-space where the posterior mean lies further from actual popularity values.

In this analysis, we compare our personalized intervention methodology which \emph{rejects} uncertainty (using $\alpha = 0.05$) with a variant of the this approach that \emph{ignores} uncertainty (using the same objective function with $\alpha = 0.5$).  Both methods share the same GP regressor, optimization procedure, and set of constraints.  For the 300 articles in the intervention set (not part of the training data) we allow intervening  upon all covariates except for the article category which presumably is fixed from an  author's perspective.  All covariate-transformations are constrained to lie within [-2,2] of the original (rescaled) covariate value, and we impose a sparsity constraint that at most 10 covariates can be intervened upon for a given article. 

Unfortunately, no pre-and-post-intervention articles are available for us to ascertain a ground truth evaluation.  To crudely measure performance, we estimate the underlying expected popularity of a given covariate-setting using \emph{benchmark  popularity}:  the (weighted) average observed popularity amongst 100 nearest neighbors (in the feature-space) from the set of held-out articles (with weights based on inverse Euclidean distance).  Over our improvement set, the Pearson correlation between articles' observed popularity and  benchmark popularity is 0.28 (highly significant: $p \le 2\cdot 10^{-10}$).  This approach thus appears to be, on average, a reasonable way to benchmark performance (even though nearest-neighbor held-out articles can individually differ from the text of a particular pre/post-intervention article despite sharing similar values of our 29 measured covariates).  

Figure \ref{fig:writing} depicts the results of our personalized intervention for each article in our intervention set.  
The expected improvement produced by a particular intervention is estimated as the difference between the benchmark popularity of the post-intervention covariate-settings and the original covariate-settings of the article receiving the personalized intervention.  Table  \ref{tab:summarystat} summarizes these results.  A paired-sample $t$-test suggests our method is significantly superior on average ($p  < 2 \cdot 10^{-6}$).

\vspace*{5mm}
\begin{figure}[h!] \centering
\floatbox[{\capbeside \thisfloatsetup{capbesideposition={left,center},capbesidewidth=6cm,capbesidesep=quad}}]{figure}[\FBwidth]{\caption{Benchmark popularity changes produced by the personalized interventions for 300 articles suggested by our method with $\alpha = 0.05$ (Rejecting Uncertainty) vs. $\alpha = 0.5$ (Ignoring Uncertainty). The points (ie.\ articles) are colored according to the value of our personalized intervention objective with $\alpha = 0.05$. Using $\alpha=0.05$ outperforms $\alpha=0.5$ in this analysis in 177/300 articles in the improvement set.  
}
\label{fig:writing}
}{\includegraphics[width = 0.45\textwidth]{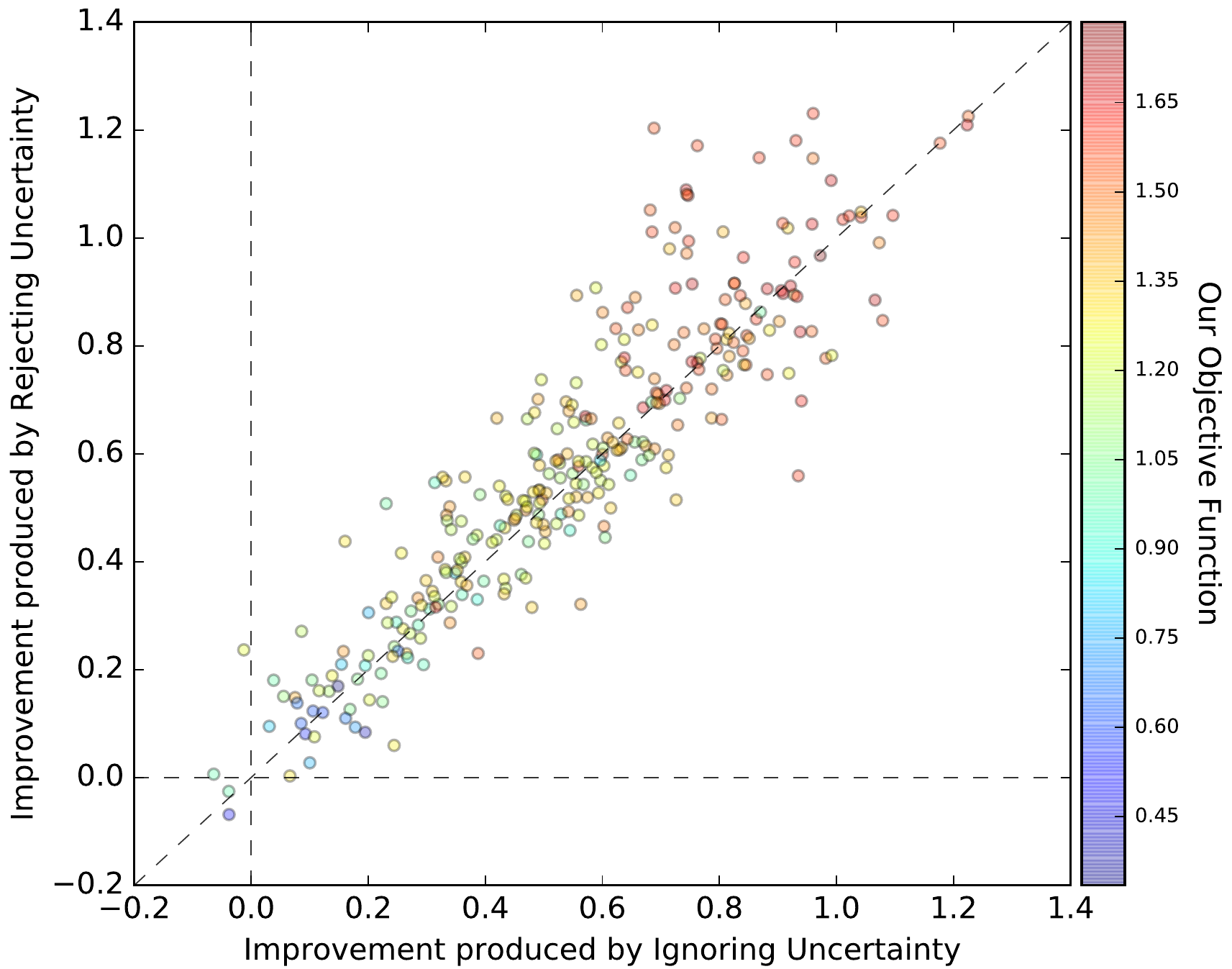}}
\end{figure} 

\vspace*{5mm}

\begin{table}[b!]  \begin{center}
\begin{tabular}{l c c c c}
\multicolumn{1}{c}{\bf \hspace*{5mm} Method} & \multicolumn{1}{c}{ \ \bf Mean \ } & \multicolumn{1}{c}{ \ \bf Median \ } & \multicolumn{1}{c}{\bf $\mathbf{0.05}^{\text{th}}$ Quantile} & \multicolumn{1}{c}{\bf Num.\ Negative}
\\ \hline \vspace*{-1mm} \\ 
Rejecting Uncertainty \ \ & 0.586 & 0.578 & 0.126 & 2 \vspace*{1mm}  \\
Ignoring Uncertainty \ \ & 0.552 & 0.555 & 0.105 & 4
\vspace*{1mm} \\
\hline
\end{tabular} \end{center}
\caption{Summary statistics for the benchmark popularity change produced by each method over the 300 articles of the intervention set. The last column counts the number of harmful interventions (with change $< 0$).}
\label{tab:summarystat}
\end{table}

To provide concrete examples, we present some  articles of the Business and Entertainment categories (taken from our improvement set).  For this business article:   \url{http://mashable.com/2014/07/30/how-to-beat-the-heat/}, our method proposes shifting the following 10 covariates  (see Table \ref{tab:textfeatures} for feature descriptions): 

num\_hrefs: +2, num\_self\_hrefs: -1.25, average\_token\_length: -1.771, kw\_avg\_min: +1.71, kw\_avg\_avg: +2, self\_reference\_min\_shares: +2, self\_reference\_max\_shares: +1.68, self\_reference\_avg\_sharess: +2, global\_subjectivity: +1.57,  global\_sentiment\_polarity: -2

For this entertainment article:  \url{http://mashable.com/2014/07/30/how-to-beat-the-heat/}, our method proposes  shifting the following 10 covariates:  

average\_token\_length: -1.55, kw\_avg\_min: + 1.63, kw\_avg\_avg: +2, self\_reference\_min\_shares: +2  self\_reference\_max\_shares: +1.85, self\_reference\_avg\_shares: +2.0, LDA\_00: +1.63, LDA\_01: -2, LDA\_04: +0.82, global\_subjectivity: +1.62

Indifferent to uncertainty, the method with $\alpha = 0.5$ advocates shifting all these covariates by the $\pm 2$ maximal allowed amounts, which leads to a 0.04 worse improvement in benchmark popularity compared with the covariate-changes specified above for this article.
\nopagebreak

\begin{table}[b!]  \scriptsize \begin{center} {
\begin{tabular}{l l}
\multicolumn{1}{l}{\bf \hspace*{5mm} Feature}  & \multicolumn{1}{l}{\bf Description}
\\ \hline \vspace*{-1mm} \\ 
n\_tokens\_title     & Number of words in the title   \vspace*{1mm}  \\     
n\_tokens\_content  &  Number of words in the content  \vspace*{1mm} \\
n\_unique\_tokens  &  Rate of unique words in the content \vspace*{1mm}  \\          
n\_non\_stop\_words &  Rate of non-stop words in the content  \vspace*{1mm} \\
 num\_hrefs   & Number of links \vspace*{1mm} \\ 
 num\_self\_hrefs       & Number of links to other articles published by Mashable  
 \vspace*{1mm} \\     
  average\_token\_length  &  Average length of the words in the content  
\vspace*{1mm} \\
 num\_keywords      &  Number of keywords in the metadata
 \vspace*{1mm} \\
data\_channel\_is\_lifestyle  &  Is the article category ``Lifestyle''? 
\vspace*{1mm} \\ 
data\_channel\_is\_entertainment & Is the article category ``Entertainment''? \vspace*{1mm} \\
data\_channel\_is\_bus & Is the article category ``Business''?   \vspace*{1mm} \\        data\_channel\_is\_socmed  & Is the article category ``Social Media''?  \vspace*{1mm} \\    
data\_channel\_is\_tech   & Is the article category ``Tech''? \vspace*{1mm} \\       data\_channel\_is\_world &  Is the article category ``World''?
\vspace*{1mm} \\    
kw\_avg\_min   &   Avg. shares of articles with the  least popular keyword used for this article
\vspace*{1mm} \\
kw\_avg\_max   & Avg. shares of articles with the most popular keyword used for this article  
\vspace*{1mm} \\            
kw\_avg\_avg &  Avg. shares of the average-popularity  keywords used for this article   \vspace*{1mm}  \\   
self\_reference\_min\_shares   & Min.\ shares of referenced articles in Mashable  \vspace*{1mm}  \\
self\_reference\_max\_shares  & Max.\ shares of referenced articles in Mashable  \vspace*{1mm}  \\
self\_reference\_avg\_shares   & Avg.\ shares of referenced articles in Mashable   \vspace*{1mm}  \\
LDA\_00 & Closeness to first LDA topic   \vspace*{1mm}  \\
LDA\_01 & Closeness to second LDA topic
\vspace*{1mm}  \\                      
LDA\_02  & Closeness to third LDA topic 
\vspace*{1mm}  \\                    
LDA\_03      &  Closeness to fourth LDA topic
\vspace*{1mm}  \\                 
LDA\_04      & Closeness to fifth LDA topic
\vspace*{1mm}  \\                 
global\_subjectivity   & Subjectivity score of the text
\vspace*{1mm}  \\       
global\_sentiment\_polarity  &  Sentiment polarity of the text
\vspace*{1mm}  \\   
title\_subjectivity    &  Subjectivity score of title  \vspace*{1mm}  \\       
title\_sentiment\_polarity  & Sentiment polarity of title   \vspace*{1mm}
\\
\hline
\end{tabular}} \end{center}
\caption{The 29 covariates of each article (dimensions of $X$ in this analysis).  Features involving the  share-counts of other articles and LDA were based only on data known before the publication date.}
\label{tab:textfeatures}
\end{table}

\clearpage
\section{Proofs and additional Theoretical Results}
\label{subsec:proofs}

\subsubsection*{Notation and Definitions}
\label{notation}
\addcontentsline{toc}{subsubsection}{\nameref{notation}}

All points $x \in \mathbb{R}^d$ lie in convex and compact domain $\mathcal{C} \subset \mathbb{R}^d$.

$C$ denotes constants whose value may change from line to line.

All occurrences of $f$ are implicitly referring to $f \mid \mathcal{D}_n$.

$\mu_n(\cdot)$, $\sigma^2_n(\cdot)$, and $\sigma_n(\cdot, \cdot)$ respectively denote the mean, variance, and covariance function of our posterior for $f \mid \mathcal{D}_n$ under the  GP$\big(0, k(x,x')\big)$ prior. 

$F^{-1}_Z(\alpha)$ denotes the $\alpha^{\text{th}}$ quantile of random variable $Z$. 

$\Phi^{-1}(\cdot)$ denotes the $N(0,1)$ quantile function.

$||\cdot||_k$ denotes the norm of reproducing kernel Hilbert space $\mathcal{H}_k$.

$\mathcal{B}_\delta(x) \subset \mathbb{R}^d$ denotes the ball of radius $\delta$ centered at $x \in \mathcal{C}$.

$\mathcal{I} \subseteq \{1,\dots,d\}$ represents the set of variables which are intervened upon in sparse settings.

$\text{pa}(Y)$ denotes the set of variables which are parents of $Y$ in a causal \emph{directed acyclic graph} (DAG) \citepsi{Pearl2000si}

$\text{desc}(\mathcal{I})$ is the set of variables which  are descendants of at least one variable in $\mathcal{I}$ according to the causal DAG.

$A^C$ denotes the complement of set $A$.

The \emph{squared exponential} kernel (with length-scale parameter $l > 0$) is defined: $$k(x,x') = \exp\Big( -\frac{1}{2 l^2} ||x - x' ||^2 \Big) $$

The \emph{Mat\'ern} kernel (with another parameter $\nu > 0$  controlling smoothness of sample paths) is defined: $$k(x,x') = \frac{2^{1-\nu}}{\Gamma(\nu)} r^\nu B_\nu(r) \ \ \ \text{ where } \ r = \frac{\sqrt{2\nu}}{l} || x - x'||, B_\nu \text{ is a modified Bessel function}$$

Random variables $\varepsilon^{(1)}, \dots, \varepsilon^{(n)}$ form a \emph{martingale difference sequence} which is \emph{uniformly bounded} by $\sigma$ if 
$ \ \mathbb{E}[\varepsilon^{(i)} \mid \varepsilon^{(i-1)},\dots, \varepsilon^{(1)}] = 0 $ and $\varepsilon^{(i)} \le \sigma \ \  \forall i \in \mathbb{N}$.

A function $f$ is \emph{Lipshitz continuous} with constant $L$ if: $|f(x) - f(x')| \le L |x - x'|$ for every $x, x' \in \mathcal{C}$.

Suppose $\rho > 0$ is expressed as $\rho = m + \eta$ for nonnegative integer $m$ and  $0 < \eta \le 1$. \\
The \emph{H\"{o}lder space} $C^\rho[0,1]^d$ is the space of  functions with existing partial derivatives of orders $(k_1, \dots, k_d)$ for all integers $k_1, \dots, k_d \ge 0$ satisfying $k_1 + \dots +  k_d \le m$. Additionally, each function's highest order partial derivative must form a function $h$ that satisfies: $|h(x) - h(y)| \le C |x-y|^\eta$ for any $x,y$.

\bigskip

\noindent \begin{thm}[\citesi{VanderVaart2011si}] \label{thm:van}
Under the assumptions of Theorem \ref{thm:close}:
$$\mathbb{E}_{\mathcal{D}_n} \int \int_\mathcal{C} [f(x) - f^*(x)]^2 p_X(x) \mathrm{d}x \ \mathrm{d} \Pi_n ( f \mid \mathcal{D}_n) \ \le \ C \cdot \Psi_{\hspace*{-0.2mm}f^*}(n)$$ 
\end{thm}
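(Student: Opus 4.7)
The plan is to invoke the general posterior contraction framework for Gaussian process priors developed by van der Vaart and van Zanten, and then convert the resulting probabilistic contraction statement into the integrated mean-square-error bound claimed here. The starting point is the concentration function $\phi_{f^*}(\epsilon)$: its first term $\inf_{h \in \mathcal{H}_k : \|h - f^*\|_\infty < \epsilon} \|h\|_k^2$ quantifies how well $f^*$ can be approximated by elements of the RKHS, while its second term $-\log \Pi(f : \|f\|_\infty < \epsilon)$ is the prior small-ball probability. The contraction rate $\epsilon_n$ is the smallest $\epsilon$ satisfying $\phi_{f^*}(\epsilon) \le n\epsilon^2$, i.e., $\psi_{f^*}(\epsilon) \le n$, giving $\epsilon_n = \psi_{f^*}^{-1}(n)$.

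The first step would be to apply the standard Ghosal--Ghosh--van der Vaart machinery, which under assumption (A\ref{as:first})--type conditions on the noise and the prior produces the three ingredients needed for posterior contraction at rate $\epsilon_n$ in the Hellinger metric on the joint $(X,Y)$ densities: (i) a prior mass lower bound $\Pi(\|f - f^*\|_\infty < \epsilon_n) \gtrsim e^{-n\epsilon_n^2}$ derived directly from the concentration function bound, (ii) a sieve $\mathcal{F}_n$ of functions in $C^\rho[0,1]^d$ (feasible since this H\"older space has prior probability one) carrying all but exponentially small prior mass, and (iii) an entropy estimate for $\mathcal{F}_n$ permitting the construction of exponentially consistent tests.

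The second step converts Hellinger contraction on joint densities into $L^2(p_X)$-contraction on regression functions: since the noise is Gaussian with fixed variance $\sigma^2$ and $p_X \in [a,b]$, the squared Hellinger distance is comparable to $\int (f - f^*)^2 p_X$ (on the sieve where $f - f^*$ is bounded), so the posterior puts mass $1 - o(1)$ on a ball of $L^2(p_X)$-radius a constant multiple of $\epsilon_n$. The two regimes in $\Psi_{f^*}(n)$ then appear naturally: the first case $[\psi_{f^*}^{-1}(n)]^2$ is the square of the contraction rate, which dominates the posterior risk whenever the rate is faster than the sieve's entropy rate $n^{-d/(4\rho + 2d)}$ associated with $\rho$-smooth functions on $[0,1]^d$; the second case arises when the concentration-function rate is the slower one and the entropy bound on the $C^\rho$ sieve dictates the effective risk, producing the exponent $(4\rho + 4d)/d$.

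The third step converts the in-probability contraction statement into the expectation stated in the theorem. I would use a truncation argument: on the ``good'' event where the posterior concentrates on a shrinking $L^2(p_X)$-ball around $f^*$, the contribution to the integrated posterior risk is at most $C \epsilon_n^2 \le C \Psi_{f^*}(n)$; on the complementary ``bad'' event, which has probability decaying faster than any polynomial in $n$, one uses a crude uniform bound on $\|f\|_\infty$ derived from Borell's inequality applied to the centered Gaussian prior to control the contribution. The main obstacle will be the bookkeeping in Step 3 --- in particular, verifying that the entropy-driven regime really does produce the exponent $(4\rho + 4d)/d$ and correctly aligns with the breakpoint $n^{-d/(4\rho + 2d)}$ in the case distinction, since this requires delicately balancing the RKHS approximation error against the metric-entropy cost of the H\"older sieve. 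Once that balance is established, the rest is bookkeeping and absorbing constants into $C$.
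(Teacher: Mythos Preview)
The paper does not prove this statement at all: Theorem~\ref{thm:van} is explicitly attributed to \citesi{VanderVaart2011si} and is simply invoked as a known result, with the remark ``See \citesi{VanderVaart2011si} for a detailed discussion about this function.'' There is nothing to compare your proposal against in the paper itself.

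What you have written is a reasonable high-level sketch of how the van der Vaart--van Zanten posterior contraction theory is developed in the original reference: the concentration function controls prior mass, the H\"older assumption furnishes a sieve with controllable metric entropy, the Ghosal--Ghosh--van der Vaart testing machinery yields contraction in Hellinger, and for fixed-variance Gaussian regression with bounded design density this transfers to $L^2(p_X)$. The two-case definition of $\Psi_{f^*}(n)$ indeed reflects the interplay between the RKHS approximation rate $\psi_{f^*}^{-1}(n)$ and the entropy rate of the $C^\rho$ sieve. Your Step~3 (upgrading from contraction in probability to the expected integrated posterior risk) is the part most often glossed over; a Borell-type tail bound on $\|f\|_\infty$ under the prior, combined with the exponentially small posterior mass outside the contraction ball, is the right idea, though making the constants and the exact exponent $(4\rho+4d)/d$ line up requires care that your sketch does not supply. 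If your goal is only to justify the use of Theorem~\ref{thm:van} in this paper, a citation suffices; if your goal is to reprove it, you would need to fill in the entropy calculation for the $C^\rho$ sieve and the tail-truncation argument explicitly.
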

where $\Psi_{\hspace*{-0.2mm}f^*}^{-1}(n)$ is defined as in \S\ref{sec:theory}.  See \citesi{VanderVaart2011si} for a detailed discussion about this function.

\subsubsection*{Proof of Theorem \ref{thm:close}}
\label{proof:close}
\addcontentsline{toc}{subsubsection}{\nameref{proof:close}}

\begin{proof} Recall $G_{x}(T) := f(T(x)) - f(x) \mid \mathcal{D}_n$ depends on $f$. We fix $x_0, T(x_0) \in \mathcal{C}$ and adapt the bound provided by Theorem \ref{thm:van} to show our result.  Let $\mathcal{B}_\delta(x) \subset \mathcal{C}$ denote the ball of radius $0 < \delta < \frac{1}{2}$ centered at $x \in \mathcal{C}$.
We first establish the bound:
\begin{align*}
& \int_\mathcal{C} \big|f(x) - f^*(x) \big| p_X(x) \ \mathrm{d}x 
\\
\ge & \int_{\mathcal{B}_\delta(x_0)} \big| f(x) - f^*(x) \big| p_X(x) \ \mathrm{d}x  + \int_{\mathcal{B}_\delta(T(x_0))} \big| f(x) - f^*(x) \big| p_X(x) \ \mathrm{d}x        
\\
\ge & a \cdot \text{Vol}(\mathcal{B}_\delta) \Big[ \min_{x \in \mathcal{B}_\delta(x_0)} \big| f(x) - f^*(x) \big| + \min_{x \in \mathcal{B}_\delta(T(x_0))} \big| f(x) - f^*(x) \big| \Big]
\\
\ge &  a \cdot \text{Vol}(\mathcal{B}_\delta) \cdot \Big[ \Big|  f(T(x_0)) - f(x_0) - \big[ f^*(T(x_0)) - f^*(x_0) \big] \Big| - 8 \delta L \Big]  \\
\ge & a \cdot \text{Vol}(\mathcal{B}_\delta) \cdot \Big[  \Big| G_{x_0}(T) - G^*_{x_0}(T) \Big| - 8 \delta L \Big] \numberthis \label{eq:holder}
\end{align*}

where Vol$(\mathcal{B}_\delta) = \mathcal{O}( \delta^d)$.  Theorem \ref{thm:van} implies the following inequality (ignoring constant factors):

\begin{align*}
[C \cdot &  \Psi_{\hspace*{-0.2mm}f^*}(n)   ]^{1/2} 
\\
\ge &  \Bigg[ \mathbb{E}_{\mathcal{D}_n} \int \int_\mathcal{C} [f(x) - f^*(x)]^2 p_X(x) \ \mathrm{d}x \ \mathrm{d} \Pi_n ( f \mid \mathcal{D}_n)  \Bigg]^{1/2}
\\
\ge  &  \mathbb{E}_{\mathcal{D}_n}  \int  \int_\mathcal{C} \big| f(x) - f^*(x) \big| p_X(x) \ \mathrm{d}x  \ \mathrm{d} \Pi_n ( f \mid \mathcal{D}_n)  
\tag*{by Jensen's inequality} \\
\ge & a \delta^d \cdot  \mathbb{E}_{\mathcal{D}_n}  \int \big| G_{x_0}(T) - G^*_{x_0}(T)  \big| - \delta L \ \ \mathrm{d} \Pi_n ( f \mid \mathcal{D}_n) 
\tag*{via the bound from (\ref{eq:holder})}
\\
 = & - a L \delta^{d+1} + a \delta^d \cdot  \mathbb{E}_{\mathcal{D}_n} \int_0^\infty \Pr \Big(  \big| G_{x_0}(T) - G^*_{x_0}(T) \big| \ge r  \Big) \ \mathrm{d}r 
 \\
 = & - a L \delta^{d+1} + a \delta^d \cdot \mathbb{E}_{\mathcal{D}_n} \int_0^1 F^{-1}_{ | G_{x_0}(T) - G^*_{x_0}(T) | }(\widetilde{\alpha})  \ \  \mathrm{d}\widetilde{\alpha} 
 \\
 \ge & - a L \delta^{d+1} + a \delta^d \cdot \mathbb{E}_{\mathcal{D}_n} \int_\alpha^1  F^{-1}_{G_{x_0}(T)}(\widetilde{\alpha}) - G^*_{x_0}(T) \ \  \mathrm{d}\widetilde{\alpha} 
\\
\ge &  - a L \delta^{d+1} + a (1-\alpha) \delta^d \cdot \mathbb{E}_{\mathcal{D}_n} \Big[ F^{-1}_{G_{x_0}(T)}(\alpha) - G^*_{x_0}(T) \Big] 
\numberthis \label{eq:onewaybound}
\end{align*}

We can similarly bound $G^*_{x_0}(T) - F^{-1}_{G_{x_0}(T)}(\alpha)$:
\begin{align*}
& - a L \delta^{d+1} + a \delta^d \cdot \mathbb{E}_{\mathcal{D}_n} \int_0^1 F^{-1}_{ | G^*_{x_0}(T) - G_{x_0}(T)  |}(\widetilde{\alpha})  \ \mathrm{d}\widetilde{\alpha}  
\\
\ge & - a L \delta^{d+1} + a \delta^d \cdot  \mathbb{E}_{\mathcal{D}_n} \int_0^\alpha  G^*_{x_0}(T) -  F^{-1}_{G_{x_0}(T) }(\widetilde{\alpha}) \  \ \mathrm{d}\widetilde{\alpha}
\\
\ge & - a L \delta^{d+1} + a \alpha  \delta^d \cdot \mathbb{E}_{\mathcal{D}_n} \Big[ G^*_{x_0}(T) -  F^{-1}_{G_{x_0}(T) }(\alpha) \Big]
\numberthis \label{eq:otherwaybound}
\end{align*}

Choosing $\delta := [\Psi_{\hspace*{-0.2mm}f^*}(n)]^{\frac{1}{2(d+1)}}$ and  combining (\ref{eq:onewaybound}) and (\ref{eq:otherwaybound}) produces the desired result, since assuming $\alpha < 0.5$ implies $\alpha < 1 - \alpha$.  \end{proof}

\subsubsection*{Proof of Theorem \ref{thm:popclose}}
\label{proof:popclose}
\addcontentsline{toc}{subsubsection}{\nameref{proof:popclose}}

\begin{proof} Combining the results of Lemmas  \ref{lem:triangle1} and \ref{lem:triangle2} below, we  obtain the desired upper bound through a straightforward application of the triangle inequality.  Note that we've simplified the bound using the identity $-\log(1-\alpha) < 1/\alpha$ for $\alpha < 0.5$. \end{proof}

\bigskip
\begin{lem} \label{lem:triangle1} Under the assumptions of Theorem \ref{thm:popclose}, for any $x, T(x) \in \mathcal{C}$:
$$ \mathbb{E}_{\mathcal{D}_n} \ \Big|F^{-1}_{G_n(T)}(\alpha)  - F^{-1}_{G_X(T)}(\alpha) \Big| 
\le  
C \cdot \Big[ \frac{-L^2d}{n} \log(1-\alpha) \Big]^{1/2} 
$$
\end{lem}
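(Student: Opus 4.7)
The plan is to leverage the Gaussianity of both $G_n(T)\mid\mathcal{D}_n$ and $G_X(T)\mid\mathcal{D}_n$, which holds because each is a linear functional of the GP posterior $f\mid\mathcal{D}_n$. By the Gaussian quantile identity (\ref{eq:quantilegp}), the quantile difference reduces to the explicit decomposition
\[
F^{-1}_{G_n(T)}(\alpha) - F^{-1}_{G_X(T)}(\alpha) \;=\; (\mu_n - \mu_X) \;+\; \Phi^{-1}(\alpha)(\sigma_n - \sigma_X),
\]
where $\mu_n,\sigma_n^2$ and $\mu_X,\sigma_X^2$ are the posterior means and variances of $G_n(T)$ and $G_X(T)$. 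Combining $(a+b)^2 \le 2(a^2+b^2)$ with the $L^2$-triangle inequality $|\sigma_n - \sigma_X|^2 \le \text{Var}(G_n(T) - G_X(T)\mid\mathcal{D}_n)$ (valid since the jointly Gaussian pair has correlation at most $1$) gives
\[
\bigl(F^{-1}_{G_n(T)}(\alpha) - F^{-1}_{G_X(T)}(\alpha)\bigr)^2 \;\le\; 2\max\!\bigl(1,\Phi^{-1}(\alpha)^2\bigr)\cdot \mathbb{E}\bigl[(G_n(T) - G_X(T))^2\bigm|\mathcal{D}_n\bigr],
\]
thereby converting the quantile-comparison problem into bounding the marginal second moment of $G_n(T)-G_X(T)$.

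Next I would bound this second moment by interchanging the order of sampling: draw $f$ from the prior first, then the IID inputs $x^{(i)}\sim p_X$ and noise. Assumption (A5) puts the prior on $L$-Lipschitz functions over $\mathcal{C} = [0,1]^d$, so $g(x) := f(T(x)) - f(x)$ is uniformly bounded in absolute value by $L\sqrt{d}$, which implies $\text{Var}_{p_X}(g(X)) \le L^2 d$. Conditional on $f$, the quantity $G_n(T) - G_X(T) = \tfrac{1}{n}\sum_i g(x^{(i)}) - \mathbb{E}_X[g(X)]$ is a centered empirical-mean deviation, whose variance (taken over the IID $x^{(i)}$) is $\text{Var}_{p_X}(g(X))/n \le L^2 d/n$. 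Marginalizing over $f$ preserves the inequality, and Jensen's inequality promotes the $L^2$ bound to an $L^1$ one, yielding $\mathbb{E}_{\mathcal{D}_n}|F^{-1}_{G_n(T)}(\alpha) - F^{-1}_{G_X(T)}(\alpha)| \le \sqrt{2\max(1,\Phi^{-1}(\alpha)^2)\cdot L^2 d/n}$.

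The delicate remaining step -- and the one I expect to be the main obstacle -- is massaging the $\max(1,\Phi^{-1}(\alpha)^2)$ factor into the stated $-\log(1-\alpha)$ form. The natural Gaussian-tail inversion gives $\Phi^{-1}(\alpha)^2 \le 2\log(1/\alpha)$ for $\alpha\le 1/2$, which is not literally $-\log(1-\alpha)$; I expect the paper absorbs the mismatch into the constant $C$, exploiting the fact that the subsequent proof of Theorem \ref{thm:popclose} invokes the crude simplification $-\log(1-\alpha) < 1/\alpha$ so only the $1/\alpha$ scaling ultimately matters once Lemma \ref{lem:triangle1} is combined with the $1/\alpha$ prefactor supplied by Lemma \ref{lem:triangle2}. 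The substantive content of the proof is therefore the Lipschitz-bounded sample-mean variance estimate, with the $\alpha$-dependence a routine manipulation of Gaussian-quantile inequalities.
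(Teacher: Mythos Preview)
Your argument is correct and takes a genuinely different route from the paper. The paper does not decompose the quantile difference via the explicit Gaussian formula $(\mu_n-\mu_X)+\Phi^{-1}(\alpha)(\sigma_n-\sigma_X)$. Instead it treats the $Z_i := f(T(x^{(i)})) - f(x^{(i)})$ as bounded random variables (using the same Lipschitz estimate $|Z_i|\le L\sqrt{d}$ you invoke), applies Hoeffding's inequality to obtain a sub-Gaussian tail bound on $|G_n(T)-G_X(T)|$, and then inverts that tail bound to read off the $\alpha$-quantile of $|G_n(T)-G_X(T)|$; this is where the $-\log(1-\alpha)$ factor appears directly. The paper closes by asserting that, because $G_n(T)$ and $G_X(T)$ are Gaussian under the posterior, the difference of their $\alpha$-quantiles is dominated by the $\alpha$-quantile of the absolute difference.

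What each approach buys: your second-moment route is more explicit about where the $1/\sqrt{n}$ rate originates (variance of an IID sample mean, bounded pointwise via Lipschitz continuity) and sidesteps the somewhat delicate quantile-comparison step the paper invokes at the end; the paper's Hoeffding route delivers the stated $-\log(1-\alpha)$ dependence exactly, so no post-hoc reconciliation of the $\alpha$-factor is needed. As you correctly anticipate, the mismatch between your $\max(1,\Phi^{-1}(\alpha)^2)$ and the paper's $-\log(1-\alpha)$ is cosmetic: both are $O(1/\alpha)$ on $(0,\tfrac12]$, and the proof of Theorem~\ref{thm:popclose} immediately coarsens Lemma~\ref{lem:triangle1} via $-\log(1-\alpha) < 1/\alpha$ anyway, so your constant absorbs cleanly. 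The substantive shared ingredient in both arguments is precisely the diameter bound $|f(T(x))-f(x)|\le L\sqrt{d}$ on $[0,1]^d$.
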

\begin{proof}[Proof of Lemma \ref{lem:triangle1}]
Define random variables $Z_i := f(T(x^{(i)}) - f(x^{(i)}) \mid \mathcal{D}_n$ for $i=1,\dots, n$. \\
Note that these variables all share the same expectation: $\mathbb{E}_X[Z] := \mathbb{E}_X[Z_i] = G_X(T)$ and $G_n(T) = \frac{1}{n}\sum_{i=1}^n Z_i$.  
The Lipschitz continuity of $f$ combined with the fact that $\mathcal{C} = [0,1]^d$ implies: $Z_i \in [-L \sqrt{d}, L \sqrt{d}]$ for all $i$.  
Thus, Hoeffding's inequality ensures: 
\begin{align*}
& \Pr \Bigg(\Bigg| G_n(T)   -  G_X(T) \Bigg| \ge t \Bigg) \le 2 \exp\Bigg( \frac{-n t^2}{2 L^2 d} \Bigg) 
\\
\Rightarrow \ & F^{-1}_{\big| G_n(T) - G_X(T) \big|}(\alpha) 
\le 
C \cdot \Big[ \frac{-L^2d}{n} \log(1-\alpha) \Big]^{1/2} 
\\
\end{align*}
Because posteriors $ G_n(T), G_X(T)$ follow a  Gaussian distribution:
\begin{align*}
& F^{-1}_{G_n(T)}(\alpha) - F^{-1}_{G_X(T)}(\alpha)  
\le 
F^{-1}_{\big| G_n(T) - G_X(T) \big|}(\alpha) 
\\
\text{ and } & F^{-1}_{G_X(T)}(\alpha)  -  F^{-1}_{G_n(T)}(\alpha) 
\le
F^{-1}_{\big| G_n(T) - G_X(T) \big|}(\alpha) 
\end{align*}
\end{proof}

\bigskip
\begin{lem} \label{lem:triangle2} 
Under the assumptions of Theorem \ref{thm:popclose}, for any $x, T(x) \in \mathcal{C}$:
$$ \mathbb{E}_{\mathcal{D}_n} \ \Big|F^{-1}_{G_X(T)}(\alpha)  - G_X^*(T) \Big| 
\le 
\frac{C}{\alpha} \cdot \Big(L + \frac{1}{a} \Big) \cdot  [\Psi_{f^*}(n)]^{1 / [2(d+1)]} 
$$
\end{lem}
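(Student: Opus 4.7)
The plan is to mimic the proof of Theorem \ref{thm:close} very closely, replacing its pointwise localization step by one that averages over the covariate distribution $p_X$ so that the resulting lower bound features $|G_X(T) - G_X^*(T)|$ in place of a pointwise $|G_{x_0}(T) - G_{x_0}^*(T)|$. Once this is done, every other ingredient carries over unchanged.

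First, I would recall the localized inequality already established inside the proof of Theorem \ref{thm:close}: for every $x_0\in\mathcal{C}$ and every $\delta$ small enough that $\mathcal{B}_\delta(x_0)$ and $\mathcal{B}_\delta(T(x_0))$ lie in $\mathcal{C}$,
\[
\int_{\mathcal{C}}|f(u) - f^*(u)|\,p_X(u)\,du \;\ge\; a\cdot\text{Vol}(\mathcal{B}_\delta)\,\bigl[|G_{x_0}(T) - G^*_{x_0}(T)| - 8\delta L\bigr].
\]
The crucial observation is that the left-hand side is a scalar independent of $x_0$. Integrating both sides of this inequality against $p_X(x_0)\,dx_0$ over $\mathcal{C}$, and then applying Jensen's inequality to pull $|\cdot|$ outside of the expectation $\mathbb{E}_X$ on the right, produces the population-level analogue of (\ref{eq:holder}):
\[
\int_{\mathcal{C}}|f - f^*|\,p_X\,du \;\ge\; a\cdot\text{Vol}(\mathcal{B}_\delta)\,\bigl[|G_X(T) - G^*_X(T)| - 8\delta L\bigr].
\]

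From here the argument of Theorem \ref{thm:close} transplants directly. Applying Theorem \ref{thm:van} together with Jensen's inequality to the left-hand side yields $\mathbb{E}_{\mathcal{D}_n}\int\int|f - f^*|p_X\,du\,d\Pi_n(f\mid\mathcal{D}_n) \le [C\Psi_{f^*}(n)]^{1/2}$, and combining this with the display above gives
\[
[C\Psi_{f^*}(n)]^{1/2} \;\ge\; a\cdot\text{Vol}(\mathcal{B}_\delta)\,\mathbb{E}_{\mathcal{D}_n,f}|G_X(T) - G^*_X(T)| \;-\; 8aL\delta^{d+1}\text{Vol}(\mathcal{B}_1).
\]
The quantile-manipulation steps of Theorem \ref{thm:close} (which rest only on the elementary inequalities $\mathbb{E}|Z - c|\ge(1-\alpha)(F^{-1}_Z(\alpha) - c)$ and $\mathbb{E}|Z - c|\ge\alpha(c - F^{-1}_Z(\alpha))$ valid for any real random variable $Z$ and constant $c$, applied here with $Z = G_X(T)$ and $c = G_X^*(T)$) then convert this into a two-sided bound on $\mathbb{E}_{\mathcal{D}_n}[F^{-1}_{G_X(T)}(\alpha) - G_X^*(T)]$ of the form $\delta^{-d}[\Psi_{f^*}(n)]^{1/2} + L\delta$, up to a factor of $\max\{1/\alpha,1/(1-\alpha)\} \le 1/\alpha$ coming from the quantile step for $\alpha\le 1/2$. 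Balancing the two error terms by setting $\delta = [\Psi_{f^*}(n)]^{1/[2(d+1)]}$ then delivers the stated rate.

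I expect the only minor obstacle to be the boundary bookkeeping in the averaging step: for $x_0$ within distance $\delta$ of $\partial\mathcal{C}$, the two balls $\mathcal{B}_\delta(x_0)$ and $\mathcal{B}_\delta(T(x_0))$ need not sit entirely inside $\mathcal{C}$, so the outer integration must be restricted to the $(1-O(\delta))$-measure interior. The lost mass introduces an extra error of order $L\delta$ that is absorbed into the existing $L\delta$ correction and therefore does not affect the final rate. Beyond this, the proof is a direct reuse of the machinery already developed for Theorem \ref{thm:close}.
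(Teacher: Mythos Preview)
Your proposal is correct and follows essentially the same route as the paper: obtain a population-level analogue of inequality (\ref{eq:holder}) that lower-bounds $\int_{\mathcal C}|f-f^*|\,p_X$ by $a\,\mathrm{Vol}(\mathcal B_\delta)\bigl[|G_X(T)-G_X^*(T)|-8\delta L\bigr]$, then reuse verbatim the Jensen/Theorem~\ref{thm:van} step, the quantile manipulations, and the choice $\delta=[\Psi_{f^*}(n)]^{1/[2(d+1)]}$ from the proof of Theorem~\ref{thm:close}. The only cosmetic difference is that the paper rederives the population inequality directly, whereas you obtain it by integrating the already-proved pointwise inequality against $p_X(x_0)\,dx_0$ and applying Jensen; your boundary remark is an extra bit of care the paper omits.
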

\begin{proof}[Proof of Lemma \ref{lem:triangle2}]
A similar argument as the proof of Theorem \ref{thm:close} applies here.  We again first bound:
\begin{align*}
&  \int_\mathcal{C} \big|f(x) - f^*(x) \big| p_X(x) \ \mathrm{d}x 
\\
\ge & a \cdot \text{Vol}(\mathcal{B}_\delta) \cdot \Bigg[ \int_\mathcal{C} \big|f(x) - f^*(x) \big| p_X(x) \ \mathrm{d}x +  \int_\mathcal{C} \big| f(T(x))- f^*(T(x)) \big| p_X(x) \ \mathrm{d}x - 8 \delta L \Bigg]
\\
\ge & a \cdot \text{Vol}(\mathcal{B}_\delta) \cdot \Bigg[  \Big| \mathbb{E}_X [f(x) - f^*(x)] + \mathbb{E}_X [f(T(x)) - f^*(T(x))] \Big| - 8 \delta L \Bigg] 
\end{align*}
Following the same reasoning as in the proof of Theorem \ref{thm:close}, we obtain (up to constant factors):
\begin{align*}
- a L \delta^{d+1} + a \alpha  \delta^d \cdot \mathbb{E}_{\mathcal{D}_n} \Big[  G_X^*(T) -  F^{-1}_{G_X(T)}(\alpha) \Big] \le  [C \cdot   \Psi_{\hspace*{-0.2mm}f^*}(n)   ]^{1/2} 
\end{align*}
and we can use the same argument to similarly bound 
$$\mathbb{E}_{\mathcal{D}_n} \Big[ F^{-1}_{G_X(T)}(\alpha) - G_X^*(T) \Big]$$  \end{proof}

\subsubsection*{Proof of Theorem \ref{thm:dooperation}}
\label{proof:dooperation}
\addcontentsline{toc}{subsubsection}{\nameref{proof:dooperation}}

Here, we employ subscripts to index particular covariates of $X$.  The notation $[a_{R}, a_{S}] = a \in \mathbb{R}^d$ is used to denote a vector assembled from disjoint subsets of dimensions $R,S \subseteq \{1,\dots, d\}$.  Regardless of the ordering of these partitions in our notation, we assume they are correctly arranged in the assembled vector based on  their subscript-indices (ie.\ $a = [a_{R}, a_{S}] = [a_{S}, a_{R}]$).

\begin{proof}
\begin{align*}
& \mathbb{E}_{\text{do}(X_{\mathcal{I}} =  z_{\mathcal{I}})} \big[ f^*(x) \big] 
\\
 = & \int f^*\big([x_{\mathcal{I}^C} ,  z_{\mathcal{I}}]\big) \  p\big(x_{\mathcal{I}^C} \mid do(X_\mathcal{I} = z_\mathcal{I}) \big) \ \mathrm{d}x_{\mathcal{I}^C}
 \\
 = & \int \int 
 f^*\big([x_{\text{pa}(Y) \setminus \mathcal{I}} ,  z_{\mathcal{I} \cap \text{pa}(Y)} ,  a_{\mathcal{I}^C \setminus \text{pa}(Y)}]\big) \cdot
p\big( 
x_{\mathcal{I}^C \setminus \text{pa}(Y)} 
\mid x_{\text{pa}(Y) \setminus \mathcal{I}} ,
do(X_\mathcal{I} = z_\mathcal{I}) 
\big)
\\
& \hspace*{15mm}
 \cdot   p\big(
 x_{\text{pa}(Y) \setminus \mathcal{I}}   \mid do(X_\mathcal{I} = z_\mathcal{I}) 
 \big) \ 
 \mathrm{d}x_{\mathcal{I}^C \setminus \text{pa}(Y)}  \ 
 \mathrm{d}x_{\text{pa}(Y) \setminus \mathcal{I}}
 \\
 & \text{ where covariate-subset } a_{\mathcal{I}^C \setminus \text{pa}(Y)} \text{ can take arbitrary values since $f^*$ is constant along covariates  $\notin \text{pa}(Y)$} 
 \\
 = & \int f^*\big([x_{\text{pa}(Y) \setminus \mathcal{I}} ,  z_{\mathcal{I} \cap \text{pa}(Y)} ,  a_{\mathcal{I}^C \setminus \text{pa}(Y)}]\big) \
 p\big(
 x_{\text{pa}(Y) \setminus \mathcal{I}}   \mid do(X_\mathcal{I} = z_\mathcal{I}) 
 \big) \ 
 \mathrm{d}x_{\text{pa}(Y) \setminus \mathcal{I}}
 \\
 = & \int f^*\big([x_{\text{pa}(Y) \setminus \mathcal{I}} ,  z_{\mathcal{I} \cap \text{pa}(Y)} ,  a_{\mathcal{I}^C \setminus \text{pa}(Y)}]\big) \
 p\big(x_{\text{pa}(Y) \setminus \mathcal{I}} \big) 
 \ 
 \mathrm{d}x_{\text{pa}(Y) \setminus \mathcal{I}}
 \\
 & \text{ since the marginal distribution over }  X_{\text{pa}(Y) \setminus \mathcal{I}} \text{ equals the \emph{do}-distribution by assumption (A\ref{as:bestdo})} 
 \\  
 = & \int \int 
 f^*\big([x_{\text{pa}(Y) \setminus \mathcal{I}} ,  z_{\mathcal{I} \cap \text{pa}(Y)} ,  x_{\mathcal{I}^C \setminus \text{pa}(Y)}]\big) \
p\big( 
x_{\mathcal{I}^C \setminus \text{pa}(Y)} 
\mid x_{\text{pa}(Y) \setminus \mathcal{I}}
\big)
\
p\big( x_{\text{pa}(Y) \setminus \mathcal{I}} \big) \
\mathrm{d}x_{\mathcal{I}^C \setminus \text{pa}(Y)} \
\mathrm{d}x_{\text{pa}(Y) \setminus \mathcal{I}}
 \\
 = & \mathbb{E}_X \Big[ f^*(T_{\mathcal{I}\shortrightarrow z}(x)) \Big]
\end{align*} \end{proof}

\subsubsection*{Proof of Theorem \ref{thm:bestdofound}}
\label{proof:bestdofound}
\addcontentsline{toc}{subsubsection}{\nameref{proof:bestdofound}}

Recall we defined: 
\begin{equation}
\mathcal{I^*} := \argmin \Big\{ |\mathcal{I}'| \ \text{ s.t. } \ \exists \ T_{\mathcal{I}'\shortrightarrow z} \in  \argmax_{T_{\mathcal{I}\shortrightarrow z} : |\mathcal{I}| \le k} \mathbb{E}_X \big[ f^*(T_{\mathcal{I}\shortrightarrow z}(x)) - f^*(x) \big]  \Big\}
\end{equation}
as the intervention set corresponding to the optimal sparse covariate-fixing transformation (taken to be the set of minimal cardinality in cases with multiple maxima).

\begin{proof}  Since $\mathbb{E}_X [f^*(T_{\mathcal{I}\shortrightarrow z}(x))]$ does not change when $z_j := [T_{\mathcal{I}\shortrightarrow z}(x)]_j$ is altered for any $j \notin \text{pa}(Y)$, including variables outside of the parent set in $\mathcal{I}$ does not improve this quantity.  Thus, either $ \text{pa}(Y) \subseteq \mathcal{I}^*$, or $\mathcal{I}^* \subset \text{pa}(Y)$.  The first case immediately implies (A\ref{as:bestdo}).  When $\mathcal{I}^* \subset \text{pa}(Y)$: our assumption that no variable in $\text{pa}(Y)$ is a descendant of other parents implies the other parents must belong the complement of  $\text{desc}(\mathcal{I}^*)$, since this is a subset of $\text{desc}\big(\text{pa}(Y)\big)$.
\end{proof}

\subsection*{Theorem \ref{thm:notbad} and Proof}
\label{supsec:thmnotbad}
\addcontentsline{toc}{subsubsection}{\nameref{supsec:thmnotbad}}

\begin{thm} \label{thm:notbad}
Suppose we adopt a GP$\big(0, k(x,x')\big)$ prior and, in addition to the assumptions outlined in \S\ref{sec:theory}, the following conditions hold: \refstepcounter{assumption} (A\theassumption) \label{as:srinivas1} $f^* \in \mathcal{H}_k(\mathcal{C})$ which is the RKHS induced by our covariance function $k$ with norm $||\cdot||_k$ (cf. \citesi{Rasmussen2006si} \S6.1),
\refstepcounter{assumption}
(A\theassumption) \label{as:srinivas2} noise variables $\varepsilon^{(i)}$ form a uniformly bounded martingale difference sequence $\varepsilon^{(i)} \le \sigma$ for $i= 1,\dots, n$.

Then, for any $\displaystyle x, T(x) \in \mathcal{C} : \ \  F^{-1}_{G_x(T)}(\alpha) \le G^*_x(T)  $
$$\text{with probability (over the noise) greater than } \   1 - C(n+1) \cdot \exp\left( -\frac{[\Phi^{-1}(\alpha)]^2 - 2 ||f^*||^2_k}{\gamma_n} \right)$$
\end{thm}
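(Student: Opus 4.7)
The plan is to exploit the Gaussianity of the posterior $G_x(T) \mid \mathcal{D}_n$ to obtain a closed-form expression for its quantile, then reduce the statement to a Srinivas-style concentration inequality applied to a linear functional of $f^*$. Let $\mu_n^G := \mu_n(T(x)) - \mu_n(x)$ and $\sigma_n^G := \sqrt{\sigma_n^2(T(x)) + \sigma_n^2(x) - 2\sigma_n(T(x), x)}$ be the posterior mean and standard deviation of $G_x(T)$. Under the GP prior these completely determine the (Gaussian) posterior, so $F^{-1}_{G_x(T)}(\alpha) = \mu_n^G + \Phi^{-1}(\alpha)\,\sigma_n^G = \mu_n^G - |\Phi^{-1}(\alpha)|\,\sigma_n^G$, using $\alpha \le 0.5$. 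It therefore suffices to establish
\begin{equation*}
\big|\mu_n^G - G^*_x(T)\big| \;\le\; \sqrt{\beta_n}\,\sigma_n^G \quad \text{with } \beta_n \le [\Phi^{-1}(\alpha)]^2,
\end{equation*}
with probability at least $1-\delta$ over the noise $\boldsymbol{\varepsilon}$, since this implies $F^{-1}_{G_x(T)}(\alpha) \le G^*_x(T)$.

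For the concentration I would adapt the argument of Srinivas et al.\ (their Lemma~5.1 / Theorem~6 in the agnostic RKHS setting) from point evaluations $f \mapsto f(z)$ to the bounded linear functional $L(f) := f(T(x)) - f(x)$. Writing $\mu_n^G = L(\mu_n) = \mathbf{a}^{\!\top}\mathbf{y}$ with $\mathbf{a} := (\mathbf{K}_{n,n} + \sigma^2 I)^{-1}[\mathbf{k}_n(T(x)) - \mathbf{k}_n(x)]$ and $\mathbf{y} = \mathbf{f}^*(X_n) + \boldsymbol{\varepsilon}$, I decompose
\begin{equation*}
\mu_n^G - G_x^*(T) \;=\; \big[\mathbf{a}^{\!\top}\mathbf{f}^*(X_n) - L(f^*)\big] \;+\; \mathbf{a}^{\!\top}\boldsymbol{\varepsilon}.
\end{equation*}
The deterministic ``bias'' term is bounded by $\sqrt{2}\,\|f^*\|_k\, \sigma_n^G$ by Cauchy--Schwarz in $\mathcal{H}_k$ combined with the standard RKHS identity expressing the norm of the kernel-approximation residual to the representer of $L$ in terms of $\sigma_n^G$, exactly mirroring Srinivas's pointwise derivation. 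The stochastic ``noise'' term $\mathbf{a}^{\!\top}\boldsymbol{\varepsilon}$ is a weighted sum of the uniformly bounded martingale differences $\varepsilon^{(i)}$, and a self-normalized Freedman/Chowdhury--Gopalan-type inequality adapted to the functional $L$ yields $|\mathbf{a}^{\!\top}\boldsymbol{\varepsilon}| \le \sigma_n^G \sqrt{\gamma_n \log(C(n+1)/\delta)}$ with probability at least $1-\delta$, where $\gamma_n$ is the maximum information gain of $n$ observations as in Srinivas et al.

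Combining the two pieces gives $\beta_n = 2\|f^*\|_k^2 + \gamma_n \log(C(n+1)/\delta)$. Imposing $\beta_n = [\Phi^{-1}(\alpha)]^2$ and inverting for $\delta$ yields
\begin{equation*}
\delta = C(n+1)\exp\!\Big(-\big([\Phi^{-1}(\alpha)]^2 - 2\|f^*\|_k^2\big)/\gamma_n\Big),
\end{equation*}
which is precisely the advertised failure probability.

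The main obstacle is obtaining the sharp variance scaling $\sigma_n^G$ rather than the looser $\sigma_n(T(x)) + \sigma_n(x)$. A naive route --- applying the pointwise Srinivas bound separately at $x$ and $T(x)$ and union-bounding --- yields $|\mu_n^G - G^*_x(T)| \le \sqrt{\beta_n}[\sigma_n(T(x)) + \sigma_n(x)]$, which strictly exceeds $\sqrt{\beta_n}\,\sigma_n^G$ and, crucially, discards the posterior covariance that makes $G_x(T)$ far more concentrated than its two marginals when $x$ and $T(x)$ are close (exactly the regime where the theorem is most interesting). Recovering the bound in the statement therefore requires running the Srinivas martingale/self-normalization argument directly with the difference-functional weight vector $\mathbf{a}$ above, or equivalently introducing an auxiliary GP on an augmented input space whose point evaluations coincide with $G_x(T)$ and invoking the pointwise bound there.
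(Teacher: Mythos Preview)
Your proposal takes a genuinely different route from the paper, and in fact you have correctly anticipated the sticking point. The paper's proof is precisely the ``naive route'' you describe and dismiss: it invokes the pointwise Srinivas--Krause--Kakade--Seeger bound separately at $x$ and at $T(x)$ (with $\sqrt{\beta_{n+1}}=|\Phi^{-1}(\alpha)|$ after choosing $\delta$ appropriately), then replaces $\sigma_n^G$ by $\sigma_n(T(x))+\sigma_n(x)$ via $|\sigma_n(x,T(x))|\le \sigma_n(x)\sigma_n(T(x))$. There is no difference-functional or self-normalized argument; the paper never controls $|\mu_n^G-G_x^*(T)|$ against $\sigma_n^G$ itself.

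Your worry about the direction of that variance inequality is well placed. Since $\sigma_n^G\le \sigma_n(T(x))+\sigma_n(x)$, one has $-\Phi^{-1}(\alpha)\sigma_n^G \le -\Phi^{-1}(\alpha)[\sigma_n(T(x))+\sigma_n(x)]$, and the paper's displayed chain yields only an \emph{upper} bound on $G^*_x(T)-F^{-1}_{G_x(T)}(\alpha)$ by a quantity that is nonnegative with probability $\ge 1-\delta$; this does not by itself force $G^*_x(T)-F^{-1}_{G_x(T)}(\alpha)\ge 0$. Your plan---running the RKHS bias/noise decomposition directly for the bounded linear functional $L(f)=f(T(x))-f(x)$ so that both pieces are controlled by $\sigma_n^G$ rather than $\sigma_n(T(x))+\sigma_n(x)$---is exactly what is needed to make the inequality point the right way, and it delivers the stated failure probability after solving $\beta_n=[\Phi^{-1}(\alpha)]^2$ for $\delta$. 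In short: the paper's argument is shorter but leans on a variance replacement whose inequality runs against the desired conclusion; your functional approach is the cleaner fix.
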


 In Theorem \ref{thm:notbad}, $\displaystyle \gamma_n := \max_{A \subset \mathcal{C} : |A| = n} \frac{1}{2}\log \big| \mathbf{I} + \sigma^{-2} \mathbf{K}_A \big|$ is a kernel-dependent quantity ($\mathbf{K}_A := [k(x,x')]_{x,x' \in A}$) which, in the Gaussian setting, is the mutual information between $f$ and observations of $Y$ at the most informative choice of $n$ points.  When the kernel satisfies $k(x, x') \le 1$, the following bounds are known \citepsi{Srinivas2010si}: $\gamma_n = \mathcal{O}(d \log n)$ for the linear kernel, $\gamma_n = \mathcal{O}((\log n)^{d+1})$ for the squared exponential kernel, and $\gamma_n = \mathcal{O}(n^{d(d+1)/(2\nu + d(d+1))} (\log n))$ for the Mat\'ern kernel with smoothness parameter $\nu$.

Note that while $f^*$ is not required to be drawn from our prior and $\varepsilon$ may be non-Gaussian, this result assumes the kernel $k$ and noise-level $\sigma$ are correctly set.  Our proof relies on the following statement:

\bigskip

\noindent \begin{thm}[\citesi{Srinivas2010si}]
\label{thm:srinivas}
Assume conditions (A\ref{as:srinivas1}) - (A\ref{as:srinivas2}), fix $\delta \in (0,1)$, and define:
$$\displaystyle \beta_n := 2 || f^* ||^2_k + 300 \gamma_n [\log (n / \delta)]^3 $$
Then: \hfill $
\begin{aligned}[t]
\Pr\Big[ \forall x \in \mathcal{C} : \ | \mu_n (x) - f^*(x) | \le \sqrt{\beta_{n+1}} \sigma_n(x) \Big] \ge 1- \delta 
\end{aligned}$\hfill\null
\end{thm}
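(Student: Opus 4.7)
The plan is to reproduce the argument of Srinivas et al., since Theorem \ref{thm:srinivas} is cited directly from their paper. The core idea is to decompose the pointwise posterior error $\mu_n(x) - f^*(x)$ into a deterministic \emph{bias} piece, arising because the GP posterior mean is merely the RKHS-optimal interpolant of $f^*$ at the observed inputs, plus a stochastic \emph{noise} piece, arising because we observe $y^{(i)} = f^*(x^{(i)}) + \varepsilon^{(i)}$ rather than $f^*$ directly. Writing the posterior mean as $\mu_n(x) = \mathbf{k}_n(x)^\top (\mathbf{K}_{n,n} + \sigma^2 \mathbf{I})^{-1} \mathbf{y}_n$ and substituting $\mathbf{y}_n$ in terms of $f^*$ plus noise, the error splits cleanly along these two pieces.

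For the bias piece, I would invoke the reproducing property together with Cauchy--Schwarz in $\mathcal{H}_k$: the deterministic portion of $\mu_n(x) - f^*(x)$ can be controlled by $\sqrt{2}\,\|f^*\|_k\,\sigma_n(x)$, which produces the $2\|f^*\|_k^2$ contribution inside $\beta_n$. For the stochastic piece, I would treat $\sum_i \alpha_i(x)\,\varepsilon^{(i)}$ (with $\alpha_i(x)$ the interpolation weights at $x$) as a martingale, using (A\ref{as:srinivas2}), and apply a Freedman-type tail inequality. The critical observation is that the predictable quadratic variation $\sum_i \alpha_i(x)^2$ can be related to $\sigma_n(x)^2 / \sigma^2$, so the noise contribution scales like $\sigma_n(x)$ up to a factor governed by the information gain $\gamma_n = \tfrac{1}{2}\log\det(\mathbf{I} + \sigma^{-2}\mathbf{K}_{n,n})$, which enters through the telescoping log-determinant identity $\sum_i \log(1 + \sigma^{-2} \sigma_{i-1}(x^{(i)})^2) = 2\gamma_n$.

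The hardest remaining step is upgrading the resulting pointwise bound into a statement holding \emph{uniformly} over $x \in \mathcal{C}$ with overall probability at least $1-\delta$. The strategy is a discretization/chaining argument: tile $\mathcal{C} = [0,1]^d$ at a sequence of increasingly fine resolutions, obtain a high-probability bound on each finite resolution via union bound (paying $\log|\text{net}|$), and transfer to arbitrary $x$ by exploiting Lipschitz continuity of both $\mu_n$ and $f^*$ (which follows from sample-path regularity together with $f^* \in \mathcal{H}_k$). The three powers of $\log(n/\delta)$ inside $\beta_n$ and the constant $300$ arise from optimizing this chaining against the Freedman-type concentration; a cruder Hoeffding-only treatment would cost an extra logarithmic factor and a worse constant.

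The main obstacle I anticipate is isolating the correct power of the information gain while maintaining a uniform-in-$x$ statement. Bounding the noise martingale pointwise and then union-bounding over a net of $\mathcal{C}$ would double-count, whereas a single global martingale argument requires carefully aggregating the conditional posterior variances $\sigma_{i-1}(x^{(i)})^2$ along the \emph{sequential} trajectory of observations---this is precisely where the identity $\tfrac{1}{2}\log\det(\mathbf{I} + \sigma^{-2}\mathbf{K}_{n,n}) = \gamma_n$ does the heavy lifting, converting a sum of conditional variances into the single quantity $\gamma_n$ appearing in the final bound.
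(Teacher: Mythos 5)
First, a point of orientation: the paper does not prove Theorem \ref{thm:srinivas} at all. It is quoted verbatim from \citesi{Srinivas2010si} (their Theorem~6) and used as a black box in the proof of Theorem \ref{thm:notbad}, so the only meaningful comparison is with the source. Your sketch matches that source on most of its architecture: the bias/noise decomposition of $\mu_n(x)-f^*(x)$, the RKHS Cauchy--Schwarz bound on the bias term (whence the $2\|f^*\|_k^2$ inside $\beta_n$), and the control of the noise term via martingale concentration combined with the telescoping log-determinant identity that converts the sum of sequential conditional variances into the information gain $\gamma_n$ are all genuinely present in the original proof.

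The gap is in how you propose to obtain uniformity over $x\in\mathcal{C}$. A discretization/chaining argument with Lipschitz transfer cannot yield the stated bound: $\beta_n$ contains no covering-number term (no $d\log(1/\epsilon)$ and no Lipschitz constant appears), and Lipschitz continuity of $\mu_n$, $\sigma_n$, and $f^*$ is not among the hypotheses (A\ref{as:srinivas1})--(A\ref{as:srinivas2}) --- for a Mat\'ern kernel of low smoothness the required sample-path regularity can simply fail. Moreover, the pointwise noise bound you would union-bound over a net is itself off: with weights $\alpha(x)=(\mathbf{K}_{n,n}+\sigma^2\mathbf{I})^{-1}\mathbf{k}_n(x)$, the quadratic variation $\sum_i\alpha_i(x)^2=\mathbf{k}_n(x)^\top(\mathbf{K}_{n,n}+\sigma^2\mathbf{I})^{-2}\mathbf{k}_n(x)$ is controlled by $\sigma^{-2}\bigl(k(x,x)-\sigma_n^2(x)\bigr)$, not by $\sigma^{-2}\sigma_n^2(x)$, so the pointwise-then-union-bound route does not even produce a deviation proportional to $\sigma_n(x)$. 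The actual proof sidesteps both problems at once: writing $k_n(\cdot,\cdot)$ for the \emph{posterior} covariance kernel, the reproducing property gives $|\mu_n(x)-f^*(x)|=|\langle \mu_n-f^*,\,k_n(x,\cdot)\rangle_{k_n}|\le\|\mu_n-f^*\|_{k_n}\,\sigma_n(x)$ simultaneously for every $x$, so uniformity comes for free once the single $x$-independent quantity $\|\mu_n-f^*\|_{k_n}^2$ is shown to be at most $\beta_{n+1}$ with probability $1-\delta$; the martingale and information-gain machinery is applied to that norm, not pointwise. (The discretization-plus-Lipschitz device you describe is the proof technique for Srinivas et al.'s Theorem~2, the Bayesian sample-path setting, not for the agnostic RKHS result restated here.)
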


\bigskip

\begin{proof}[Proof of Theorem \ref{thm:notbad}] Fix $x, T(x) \in \mathcal{C}$, and define $\displaystyle \delta := (n+1) \cdot \exp\left( -\frac{[\Phi^{-1}(\alpha)]^2 - 2 ||f^*||^2_k}{300 \gamma_n} \right)$. \\
In this case, $-\sqrt{\beta_{n+1}} = \Phi^{-1}(\alpha)$ (see definition in previous theorem).

Theorem \ref{thm:srinivas} implies that with probability $\ge 1 - \delta$: \\
\ $ | \mu_n (x) - f^*(x) | \le -\Phi^{-1}(\alpha) \cdot \sigma_n(x) $ \ and \ $| \mu_n (T(x)) - f^*(T(x)) | \le -\Phi^{-1}(\alpha) \cdot  \sigma_n(T(x)) $ \\

Since our posterior is Gaussian: 
$$ F^{-1}_{G_x(T)} (\alpha) = \mu_n(T(x)) - \mu_n(x) + \Phi^{-1}(\alpha) \bigg[\sigma^2_n(T(x)) +  \sigma^2_n(x) - 2 \sigma_n(x, T(x)) \bigg]^{1/2}$$ 
Therefore: 
\begin{align*}
& f^*(T(x)) - f^*(x) - F^{-1}_{G_x(T)} (\alpha)
\\
= & f^*(T(x)) - \mu_n(T(x)) + \mu_n(x) - f^*(x) - \Phi^{-1}(\alpha) \bigg[\sigma^2_n(T(x)) +  \sigma^2_n(x) - 2 \sigma_n(x, T(x)) \bigg]^{1/2} \\
\le &  f^*(T(x)) - \mu_n(T(x)) + \mu_n(x) - f^*(x) - \Phi^{-1}(\alpha) \bigg[\sigma^2_n(T(x)) +  \sigma^2_n(x) + 2 \sqrt{ \sigma^2_n(x) \sigma^2_n(T(x))} \bigg]^{1/2} 
\\
& \tag*{since we assume $\alpha \le 0.5 \Rightarrow \Phi^{-1}(\alpha) \le 0$, and  \ $\big| \sigma_n(x, T(x)) \big| \le \sqrt{ \sigma^2_n(x) \sigma^2_n(T(x))}$} \\
= &  f^*(T(x)) - \mu_n(T(x)) + \mu_n(x) - f^*(x) - \Phi^{-1}(\alpha) \bigg[\sigma_n(T(x)) +  \sigma_n(x) \bigg] 
\\
= &  \big[f^*(T(x)) - \mu_n(T(x)) - \Phi^{-1}(\alpha) \sigma_n(T(x)) \big]  + \big[ \mu_n(x) - f^*(x) - \Phi^{-1}(\alpha) \sigma_n(x) \big] 
\end{align*}
which is less than 0 with probability at most $\delta$.
\end{proof}

\clearpage \newpage
\subsection*{Additional References for the Supplementary Material}
\bibliographystylesi{agsm}
{\bibliographysi{InterventionOptBibliography}}

\end{document}